\definecolor{Dred}{RGB}{192,0,0} 
\definecolor{Blue}{RGB}{0,0,255} 
\newtheorem{theorem}{Theorem}
\newtheorem{lemma}{Lemma}
\newtheorem{proof}{Proof}
\newtheorem{remark}{Remark}
\newcommand*{\argmin}{\mathop{\mathrm{argmin}}}
\begin{document}
		%
		\title{IF2Net: Innately Forgetting-Free Networks for Continual Learning}
		%
		%
		%
		%
		
		\author{Depeng~Li,
			Tianqi~Wang,
			Bingrong~Xu,
			Kenji~Kawaguchi,
			Zhigang~Zeng,~\IEEEmembership{Fellow,~IEEE},\\
			and~Ponnuthurai Nagaratnam Suganthan,~\IEEEmembership{Fellow,~IEEE}
			
			\IEEEcompsocitemizethanks{
				\IEEEcompsocthanksitem D. P. Li, T. Q. Wang, and Z. G. Zeng are with the School of Artificial Intelligence and Automation, with the Institute of Artificial Intelligence, Huazhong University of Science and Technology, and also with the Key Laboratory of Image Processing and Intelligent Control of Education Ministry of China, Wuhan 430074, China. E-mail: \{dpli, tianqiwang, zgzeng\}@hust.edu.cn.
				\IEEEcompsocthanksitem B. R. Xu is with the School of Automation, Wuhan University of Technology, Wuhan 430070, China. E-mail: bingrongxu@whut.edu.cn.
				\IEEEcompsocthanksitem K. Kawaguchi is with the School of Computing, National University of Singapore, 117417, Singapore. E-mail: kawaguch@csail.mit.edu.
				\IEEEcompsocthanksitem P. N. Suganthan is with the School of Electrical and Electronic Engineering, Nanyang Technological University, Singapore 639798, and also with the KINDI Center for Computing Research, College of Engineering, Qatar University, Doha, Qatar. E-mail: epnsugan@ntu.edu.sg.
				\protect}
			
	\thanks{Manuscript received December 25, 2022.}
	\thanks{(Corresponding author: Z. G. Zeng.)}}

%
%

\markboth{Journal of \LaTeX\ Class Files,~Vol.~XX, No.~XX, December~2022}%
{Shell \MakeLowercase{\textit{et al.}}: Bare Demo of IEEEtran.cls for Computer Society Journals}
%




\IEEEtitleabstractindextext{%
\begin{abstract}
	\justifying
	Continual learning can incrementally absorb new concepts without interfering with previously learned knowledge. Motivated by the characteristics of neural networks, in which information is stored in weights on connections, we investigated how to design an Innately Forgetting-Free Network (IF2Net) for continual learning context. This study proposed a straightforward yet effective learning paradigm by ingeniously keeping the weights relative to each seen task untouched before and after learning a new task. We first presented the novel representation-level learning on task sequences with random weights. This technique refers to tweaking the drifted representations caused by randomization back to their separate task-optimal working states, but the involved weights are frozen and reused (opposite to well-known layer-wise updates of weights). Then, sequential decision-making without forgetting can be achieved by projecting the output weight updates into the parsimonious orthogonal space, making the adaptations not disturb old knowledge while maintaining model plasticity. IF2Net allows a single network to inherently learn unlimited mapping rules without telling task identities at test time by integrating the respective strengths of randomization and orthogonalization. We validated the effectiveness of our approach in the extensive theoretical analysis and empirical study.
\end{abstract}

\begin{IEEEkeywords}
	Continual learning, catastrophic forgetting, random representation learning, parsimonious orthogonal space 
\end{IEEEkeywords}}

\maketitle

\IEEEdisplaynontitleabstractindextext

%
\IEEEpeerreviewmaketitle

\IEEEraisesectionheading{\section{Introduction}\label{sec:introduction}}

%
%
%
%
\IEEEPARstart{C}{ontinual} learning (CL) is the ability to learn and remember a sequence of tasks, one at a time, with all data of the current task available but not previous or future tasks \cite{thrun1995lifelong}. However, even the current state-of-the-art deep neural networks (DNNs) would suffer from severe degeneration of recognition performance, known as the catastrophic forgetting phenomenon, when tasks are sequentially trained \cite{mccloskey1989catastrophic,goodfellow2013empirical}. The primary causes may be twofold. (i) DNNs are essentially connectionist models built by a target task, in which the acquired information is stored in connecting weights between adjacent layers. A previously learned mapping of an old task would be erased if one directly uses the standard gradient descent for learning new mappings. Consequently, the parameter-overwritten model would only remember the most recently learned task well. (ii) One of the most basic paradigms in machine learning is that test data are similarly distributed as the training data, called the independent and identically distributed (IID). However, Non-IID occurs when each task appears in sequences, e.g., input distributions are changing or/and unknown new categories emerge in future tasks. This leads to the crosstalk among labels at the output layer when it comes to decision-making.

Recently, CL has received increasing interest from the machine learning community \cite{parisi2019continual,delange2021clreview,wang2022dirichlet,yang2022dynamic,zhang2022epicker}. Prior work that is widely known on this topic falls into three families. First, replay-based approaches, e.g., exemplar replay, refer to straightforward maintaining a limited quantity of samples from each task seen and then blending them with that of a new task to retrieve the learned knowledge \cite{rebuffi2017icarl,NIPS2017GEM}. Though aided with exemplar buffers, a bounded memory budget typically carries inadequate task-specific knowledge, and the learning would still be biased toward the current mapping. Furthermore, these methods are infeasible when the data from previous tasks are not accessible due to memory constraints or privacy issues \cite{shokri2015privacy,ECCV2018MAS}. Their generative replay counterparts, alternatively, aim at yielding past observations in pseudo samples by using data generators \cite{van2018generative,ICCV2019LifelongGAN}. However, this transfers the stress from an assessed model to the generative network, making it demanding for the external module to precisely recover these old mappings \cite{wang2021triple}. Second, regularization-based approaches attempt to restrict weights without major changes by imposing penalties layer-by-layer \cite{PNAS2017EWC,ICML2017SI}. To this end, accumulative regularization terms are utilized to connection weights such that the updates only occur in constrained directions during sequential training \cite{ECCV2018MAS,WACV2020DMC}. Although these methods avoid storing data, the objectives of past tasks make a model rather inflexible to find the optimal parameters when the number of tasks is large \cite{chaudhry2021using}. Third, dynamic architecture-based approaches progressively expand a network where the newly assigned weights are exclusively in charge of new tasks \cite{CVPR2021EFT,xu2021adaptive,perkonigg2021dynamic}. This equates to allocating extra capacity for each task, and the memory complexity scales with the number of tasks \cite{AAAI2021PCL}. Interestingly, catastrophic forgetting cannot be intentionally attributed to insufficient network capacity because the same-sized networks can well accommodate multiple tasks when trained in an interleaved fashion or by using the data of all tasks seen so far \cite{mcclelland1995there,li2017learning}. 

Numerous follow-up works focused on the deficiencies observed above and further enhanced the CL performance in various ways, such as diversifying the limited exemplars \cite{tang2022learning,zhang2022memory}, constraining the magnitude of weight changes \cite{he2018overcoming,paik2020overcoming}, and compressing the extra expanded network size \cite{gao2022efficient,zhang2022self}, as summarized later in the related work section. However, we find that, in most CL approaches, the layer-wise updates of weights optimized by using stochastic gradient descent inevitably either interfere with old knowledge or obstruct learning for incoming tasks. This motivates our study on harnessing highly plastic weights and exploring the weight invariance during sequential training, and, most importantly, designing a simple network that is inherently robust against forgetting. Hence the name, innately forgetting-free network (IF2Net). The core idea is to retain the weights relative to each seen task untouched all the time, which is not well studied to the best of our knowledge. In each training session, IF2Net, with randomization and orthogonalization, provides a systematic solution for CL challenges, such that it potentially learns unlimited parallel input-to-output mappings under the task-agnostic setting.

The investigation presented can be regarded as an extended study of the existing CL families, providing an innovative idea and direction for future research on CL. The main contributions and strengths of this work include:

\begin{itemize}
\item A randomization-based representation-level learning technique is first developed to perfectly reproduce the representations from task sequences. We realized it via the forward passes unsupervised, relaxing the computational and memory requirements compared with the back-propagation algorithm. This allowed, for the first time, random weights (and biases) to be recycled to yield discriminative representations.

\item We introduced orthogonalization-based decision-making into the final output layer to break through the stability-plasticity dilemma. Consequently, the output weights are updated in the direction orthogonal to the subspace spanned by just obtained features from the penultimate layer, eliminating the inflexibility in the original layer-by-layer practice to the fullest. The parsimonious orthogonal space guarantees output weights unaffected relative to all tasks seen. In particular, we mathematically derived the orthogonal projection matrix for modulating gradients and provided proof to its learning without forgetting.

\item IF2Net is memory-efficient compared with replay and dynamic architecture-based methods because it requires no additional exemplars/parameters as new tasks arrive. Besides, the proposed method is compatible with most, if not all, CL methods due to its generality and, therefore, prompts those respective techniques in achieving new state-of-the-art results, e.g., together with the typical elastic weight consolidation (EWC) for a marginal boost. 

\item We demonstrated the effectiveness of our approach in the class-incremental learning scenario, evaluated by three metrics, and extensive experiments suggest that IF2Net consistently outperformed the prior state-of-the-art methods on MNIST, FashionMNIST, CIFAR-100, Stanford Dogs, CUB-200, and ImageNet-Subset benchmarks.
\end{itemize}

The remainder of this paper is organized as follows. Section 2 introduces related works. In Section 3, we present the CL setups and then detail the proposed IF2Net in Section 4. Section 5 conducts extensive experiments on six benchmark datasets to assess the superiority of our approach. Finally, we conclude our paper and suggest future research in Section 6.

\section{Related Work}
\textbf{Continual Learning.}
A learning paradigm, also known as lifelong/incremental learning, has been proposed to strike a balance between stability and plasticity \cite{NIPS2019RPS-Net,wen2021beneficial}. It places a single model in a dynamic environment where it must adapt a stream of tasks to make new predictions without forgetting \cite{jung2016less,li2017learning}. There are three main categories of recent CL algorithms \cite{parisi2019continual,NIPS2019RPS-Net,delange2021clreview,lesort2019continual}.

Replay-based approaches construct an exemplar buffer to save samples from previous tasks and train with current task data. As a representative, gradient episodic memory (GEM) obtains a positive transfer of knowledge to previous tasks by independently constraining the loss of each episodic memory non-increase \cite{NIPS2017GEM}. Incremental learning with dual memory (IL2M) utilizes both bounded exemplar images and past class statistics to rectify the network predictions \cite{ICCV2019IL2M}. Rainbow memory (RM) focuses on the classification uncertainty to select hard samples \cite{bang2021rainbow}. Furthermore, \cite{tang2022learning} exploits the abundant semantic-irrelevant information from unlabeled data to diversify the limited exemplars. However, replay-based approaches either generally deteriorate the performance with a smaller buffer size or are eventually not applicable to cases where data privacy should be taken into account \cite{shokri2015privacy,ECCV2018MAS}. Our approach does not rely on the exemplar buffer to tackle forgetting since it can inherently accommodate the previously learned mappings, thus satisfying the data privacy criterion.

Regularization-based approaches limit the model plasticity on learning important parameters of previous tasks by adding extra regularization terms. Elastic weight consolidation (EWC) is a trailblazer of this family, which employs a sequential Bayesian estimation and a Fisher information matrix to regularize weight updates \cite{PNAS2017EWC}. Representative approaches following it include synaptic intelligence (SI) \cite{ICML2017SI}, memory aware synapses (MAS) \cite{ECCV2018MAS}, and each network parameter is associated with the weight importance computed by different strategies. Under the Bayesian framework, another set of methods takes the posterior distribution of network parameters as the implicit penalty \cite{ICLR2018VCL,swaroop2019improving,loo2020generalized}. Although these methods address forgetting to some extent without storing past examples, they cannot perform satisfactorily in challenging settings or with complex datasets. By contrast, our approach could alleviate their demanding requirement of parameter optimization and take with them to set a new state of the art. 

Dynamic architecture-based approaches dynamically adapt network architectures by expansion or mask operation to accommodate knowledge needed for novel tasks. A progressive neural network (PNN) was proposed to gradually add new branches for all layers horizontally \cite{rusu2016progressive}. A random path selection network (RPS-Net) leverages parallel modules at each layer in which a possible searching space is formed to contain previous task-specific knowledge \cite{NIPS2019RPS-Net} for mitigating the uncontrollable enlargement of network size. The model parameters in additive parameter decomposition (APD) are decomposed into shared and task-specific ones using masks\cite{yoon2020scalable}. Efficient feature transformation (EFT) proposes a compact task-specific framework to overcome catastrophic forgetting \cite{CVPR2021EFT}. Per-class learning (PCL) employs a small-size multi-head setup where it branches an exclusive output layer for the classes learned so far via one-class learning \cite{hu2020hrn,AAAI2021PCL}. With prompt-based learning, \cite{wang2022learning} designs a query mechanism to dynamically look up a subset of task-relevant prompts by slightly introducing additional prompt parameters. On the contrary, our approach is characterized by a non-growing network, meaning that IF2Net could be constant in the number of parameters.

\textbf{Neural Networks with Random Weights.}
In conventional single-task learning, training a single-hidden layer feed-forward neural network with random weights (NN-RW) has demonstrated great potential in developing easy-implementation models, during which only the output weights need to be tuned \cite{pao1994learning,pao1992functional,suganthan2021origins}. Over decades of research, NN-RW is receiving more and more attention \cite{zhang2016visual,wang2017stochastic,TNNLS2017BLS,wang2021compact,huang2022graph}. Readers may refer to \cite{zhang2016comprehensive,scardapane2017randomness,cao2018review,gong2021research} for a comprehensive review of randomized learning techniques. Here, a crucial technical issue lies in randomly assigning the input weights (and biases) such that the randomized learner model shares a universal approximation capability in either a deterministic or nondeterministic manner. Randomly assigning input weights (and biases) from a fixed probability distribution sounds simple and fast and has been commonly adopted by random vector functional-link network (RVFLN) \cite{pao1994learning}, extreme learning machine (ELM) \cite{huang2006extreme}, stochastic configuration network (SCN) \cite{wang2017stochastic}, and broad learning system (BLS) \cite{TNNLS2017BLS}. The resulting neural networks with fixed random hidden parameters may have a universal approximation capability in a probability sense, provided that an appropriate distribution (e.g., a uniform range $[r,-r], r>0$) is properly set in advance \cite{li2017insights,dai2019BSCN,dai2021hybrid}. This means a certain “support range” leading to a randomized universal approximator exists and is data-dependent for a given task. Inspired by this technical point, we aimed to extend the fixed random hidden parameters to overcome catastrophic forgetting. However, no work has been presented so far to CL with the help of a randomized learning technique due to its random nature (performance fluctuation), shallow architecture (inferior representation), and IID condition (same as DNNs). This motivated our investigation on building stable representation-level learning with randomized multiple-hidden layers on task sequences to fill the gap. To this end, we would significantly improve NN-RW, as explained in Sections \ref{Sec_RRL} and \ref{Discussion_1}.

\textbf{Orthogonal Gradient Descent.}
Lately, some literature has started to consider the orthogonal gradient descent strategy, a back-propagation variant \cite{singhal1989training}, by keeping the input-to-output mappings fixed \cite{he2018overcoming,NMI2019OWM,li2021gopgan}. To this end, network weights are sequentially optimized in the orthogonal space of each linear layer's previously learned inputs/features. The pioneering works include conceptor-aided backpropagation (CAB) \cite{he2018overcoming} and orthogonal weights modification (OWM) \cite{NMI2019OWM}, which resorts to different mathematical theories to compute the orthogonal subspace approximately. Concretely, an orthogonal projection matrix in each layer of networks is constructed to preserve the learned knowledge. Furthermore, the gradients in neural networks are projected to the orthogonal direction of all previously learned features by the orthogonal projection matrix during learning a new task. Unlike CAB and OWM, we only applied it to the final output layer for decision-making while simplifying the form of the orthogonal projector. Detailed differences and strengths will be discussed in Sections \ref{Sec_ODM}, \ref{Discussion_2}, and \ref{Discussion_3}.

\section{Continual Learning Setup}
\textbf{Definition.}
We focused the scope of our work on the most prevalent yet challenging class-incremental learning scenario, namely CIL \cite{hsu2018re,masana2022class}. A model in this scenario solves each task seen so far and infers which task it would be encountered with, which includes the common real-world problem of incrementally learning new classes of objects. Mathematically, let us start by defining the CIL. Given the supervised learning datasets $\bm{D}_t=\{(\bm{X}_t,\bm{Y}_t)|\bm{X}_t\in \mathbb{R}^{N_t\times M_t}, \bm{Y}_t\in \mathbb{R}^{N_t\times C_t}\}$ of task $t$ ($t=1,2,\dots$), where $\bm{X}_t$ is the input, $\bm{Y}_t$ is the label, $N_t$ is the number of samples, $M_t$ and $C_t$ are the dimensions of input data and output classes, respectively. Assumed a model $f(\bm{X}_{t-1};\bm{\theta}_{t-1})$~$(t\geq2)$ trained on previous task(s), parameterized by its connection weights $\bm{\theta}_{t-1}$, the objective is to train an updated model $f(\bm{X}_t;\bm{\theta}_t)$ that can accommodate the newly emerging $C_t$ classes \cite{van2019three}, that is, $f(\bm{X}_T;\bm{\theta}_T)$ needs to remember how to perform well on the cumulative $C=\sum_{t=1}^{T}{C_t}$ classes. Meanwhile, the data of previous tasks $\bm{D}_{t}$ ($t=1,2,\dots, T-1$) for $f(\bm{X}_T;\bm{\theta}_T)$ is unavailable except for the replay-based approaches/baselines, e.g., a handful of samples in exemplar buffers can be revisited. At test time, the resulting model would be fed samples from any of tasks 1 to $T$ without telling the task identifier/oracle and evaluated by the standard CL metrics formulated as follows.

\textbf{Metrics.}
We monitored three statistics to reflect the quality of CL methods (all higher is better). First, the \textit{final average test accuracy} (ACC) reported in the majority of literature is defined as:

\begin{equation}\label{ACC}
\text{ACC}=\frac{1}{T}\sum_{t=1}^T R_{T,t}
\end{equation}

\noindent where $R_{T,t}$ means the test classification accuracy of a model on task $t$ after training on task $T$. ACC measures how the average performance of the model degrades as it learns new tasks and directly shows the test performance of a model on all tasks seen so far. Second, \textit{backward transfer} (BWT) \cite{NIPS2017GEM} is defined as the difference in accuracies between when a task is first trained and after training on the final task, averaged over all tasks:

\begin{equation}\label{BWT}
\text{BWT}=\frac{1}{T-1}\sum_{t=1}^{T-1} R_{T,t}-R_{t,t}
\end{equation}

\noindent It indicates a model's ability in knowledge retention. As a result, the lower BWT value corresponds to catastrophic forgetting, while BWT = zero is considered forgetting-free. Third, \textit{forward transfer} (FWT) \cite{NIPS2020FROMP} defines the average improvement in accuracy on a new task over an independently trained model on that task, which is expressed as:

\begin{equation}\label{FWT}
\text{FWT}=\frac{1}{T-1}\sum_{t=2}^{T} R_{t,t}-R_t^{ind}
\end{equation}

\noindent where $R_t^{ind}$ is the test classification accuracy of an independent model trained only on task $t$. FWT estimates how well a model uses previously learned knowledge to improve classification accuracy on newly seen tasks. Indeed, BWT and FWT complement ACC, e.g., if two models have similar ACC, the preferable one shares the larger BWT and FWT.

%
%
%
%

\section{Innately Forgetting-Free Network} \label{Sec_IF2Net}
\subsection{Overview}
This section explains how we defied the interference among sequential tasks by designing an innately forgetting-free network termed IF2Net. To this end, we incorporated dual projections (randomization and orthogonalization) into IF2Net such that it could keep the weights relative to each seen task untouched before and after learning a new task. In particular, the CL process is primarily mediated by the randomization-based representation-level learning in several hidden layers and the orthogonalization-based decision-making in the final output layer. An overview is shown in Fig. \ref{Fig_IF2Net}. To obtain a good grasp of the learner model produced by IF2Net, theoretical verification of two key properties, such as convergence analysis of representation approximation and proof of learning-without-forgetting decision, are provided with rigorous mathematical deductions, showing the feasibility of the proposed method under certain conditions. After this, we will highlight the prime and original contributions with much in-depth discussion.

\begin{figure}[htbp]
\centering
\includegraphics[width=1.0\columnwidth]{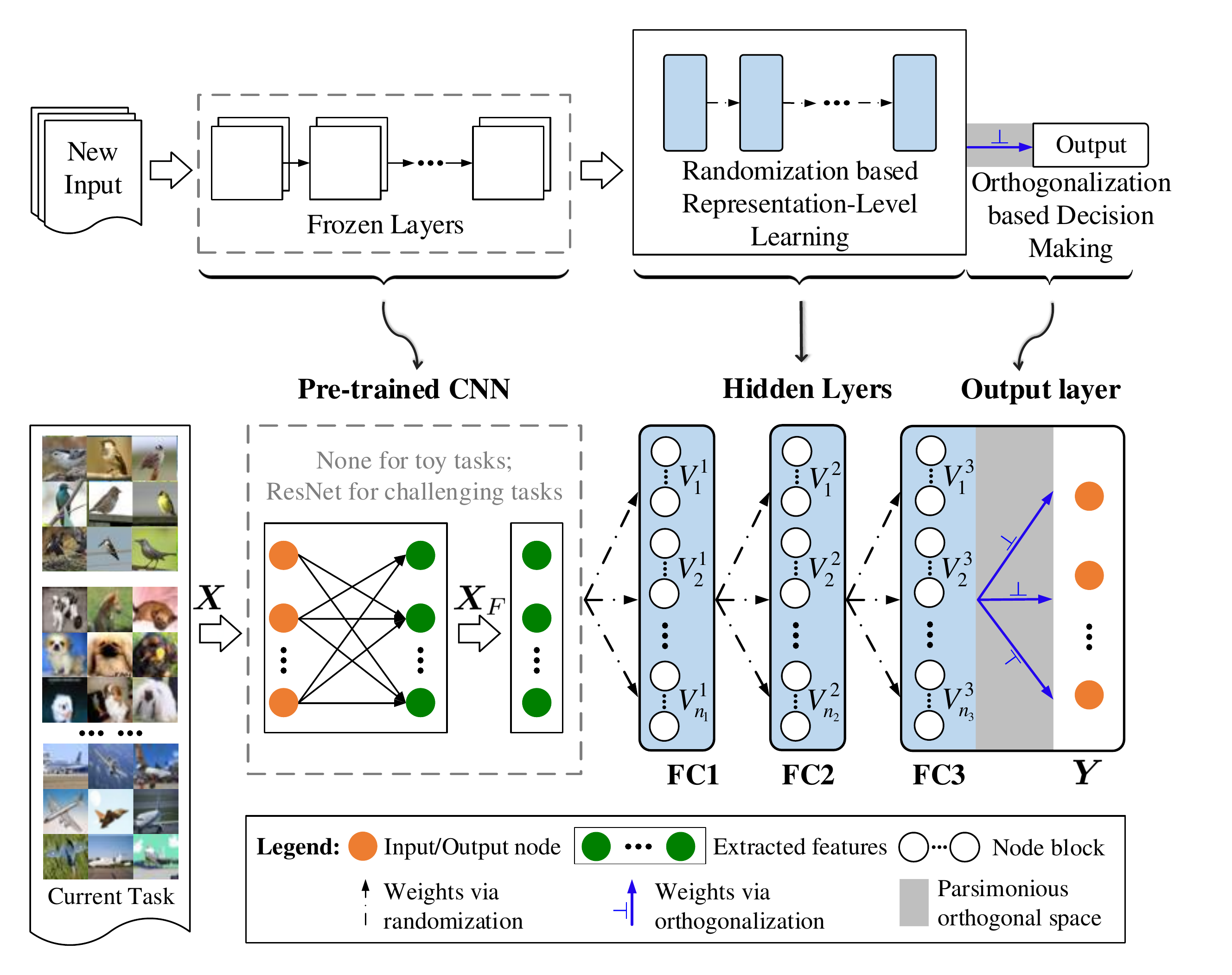}
\caption{The proposed pipeline (top panel) and an IF2Net architecture used in this study (bottom panel). IF2Net takes a new input image and passes it through frozen layers of a fixed pre-trained CNN to obtain rich representations. It then proceeds to randomization-based representation-level learning with reused random weights (and biases) connecting FC1-FC3, termed hidden layers. The output weight in the output layer is updated in the parsimonious orthogonal space spanned by discriminative features obtained from FC3 to make learning-without-forgetting decisions. Dashed boxes indicate components that are not required (tasks can be sequentially fed with the original inputs $\bm{X}$ or extracted features $\bm{X}_F$ when it is necessary).}
\label{Fig_IF2Net}
\end{figure}

\subsection{Randomization based Representation-Level Learning} \label{Sec_RRL}
The motivation for introducing randomization-based representation-level learning is threefold. First, directly retraining a network modifies its connecting weights away from optimal solutions to old tasks with new learning overwriting existing representations. Second, even if additional “memories” (such as exemplar buffers, model weights, and regularization matrices) of past tasks are used to retrieve previously learned knowledge, the layer-wise updates of weights optimized by stochastic gradient descent either interfere with old knowledge or obstruct learning for incoming tasks. Third, NN-RW with fixed random hidden parameters can share a universal approximation property given a task \cite{needell2020random}, that is, one can deemphasize the importance of weights by randomly assigning the input weights (and biases). Therefore, instead of optimizing hidden-layer weights of a fixed network, we instead emphasize the required activations (representations) that perform well over a sequence of tasks. This technique challenges common practice to push the drifted representations caused by randomization back to their separate task-optimal working states, during which the involved random weights (and biases) are reused. The details are as follows.

\subsubsection{Representation-level learning process}
Before the representation-level learning starts, IF2Net leverages the representative features from a pre-trained CNN (although it is not required); however, instead of tuning the parameters during the CL process, IF2Net keeps the pre-trained model fixed since the early layers of CNN can be highly transferable \cite{yosinski2014transferable,hayes2020remind}, and incrementally learns a set of rich representations from incoming tasks when used, resembling what \cite{NMI2019OWM,hayes2020remind,AAAI2021PCL,wang2022learning,wu2022class} has been performed. Hence, the input of IF2Net can be the original samples $\bm{X}$ from toy tasks (MNIST and FashionMNIST) or the extracted features $\bm{X}_{F}$ from high-dimensional tasks (Stanford Dogs, CUB-200, CIFAR-100, and ImageNet-Subset). We used $\bm{X}$ to denote both variables to simplify the notation.

The topology used in representation-level learning comprises several commonly used fully connected (FC) layers, but the training process is entirely different from that in previous work. With the help of randomly initialized weights (and biases) in the hidden layers, the inputs/features are transformed into a random representation space, which can help exploit hidden information among the training samples. In particular, each layer $l$ $(l=0,1,\dots,L-1)$ in this subpart randomly allocates $S_l$ nodes that are composed of $n_l$ groups of node blocks, with each group $s_l$ nodes to diversify their counterparts of output representations. In the following, we used a simple three-layer fully connected implementation (see the Hidden Layers located in the bottom panel of Fig. \ref{Fig_IF2Net}) to describe the representation-level learning process. First, we defined the forward function of $j$th ($j=1,2,\dots,n_{l+1}$) node block at layer $l+1$ as follows:

\begin{equation}\label{Vl}
\bm{V}^{l+1}_j=\sigma(\tilde{\bm{V}}^l\bm{W}^l_j+1_{N_t}\bm{b}^l_j)
\end{equation}

\noindent where $\tilde{\bm{V}}^l\in\mathbb{R}^{N_t \times S_l}$ is the tweaked activations (representations) from preceding layer $l$, matrix $\bm{W}^l_j \in\mathbb{R}^{S_l \times s_{l+1}}$ and row vector $\bm{b}^l_j\in\mathbb{R}^{1\times s_{l+1}}$ are the randomly assigned weights within a proper scope setting (e.g., $[-1, 1]$) \cite{dai2019BSCN}, $1_{N_t} \in \mathbb{R}^{N_t}$ is the column vector with every entry being one, and $\sigma(\cdot)$ is the activation function at each layer, respectively. Note that $\tilde{\bm{V}}^0=\bm{X}_t$ and $\bm{\theta}=(\bm{W}^l_j,\bm{b}^l_j)$. However, the above single forward passes potentially lead to drifted representations $\bm{V}^{l+1}_j$ ($j=1,2,\dots,n_{l+1}$) caused by randomization \cite{wang2017stochastic,dai2019BSCN}, as depicted in Fig. \ref{Fig_RRL}.

\begin{figure}[tbp]
\centering
\includegraphics[width=1.0\columnwidth]{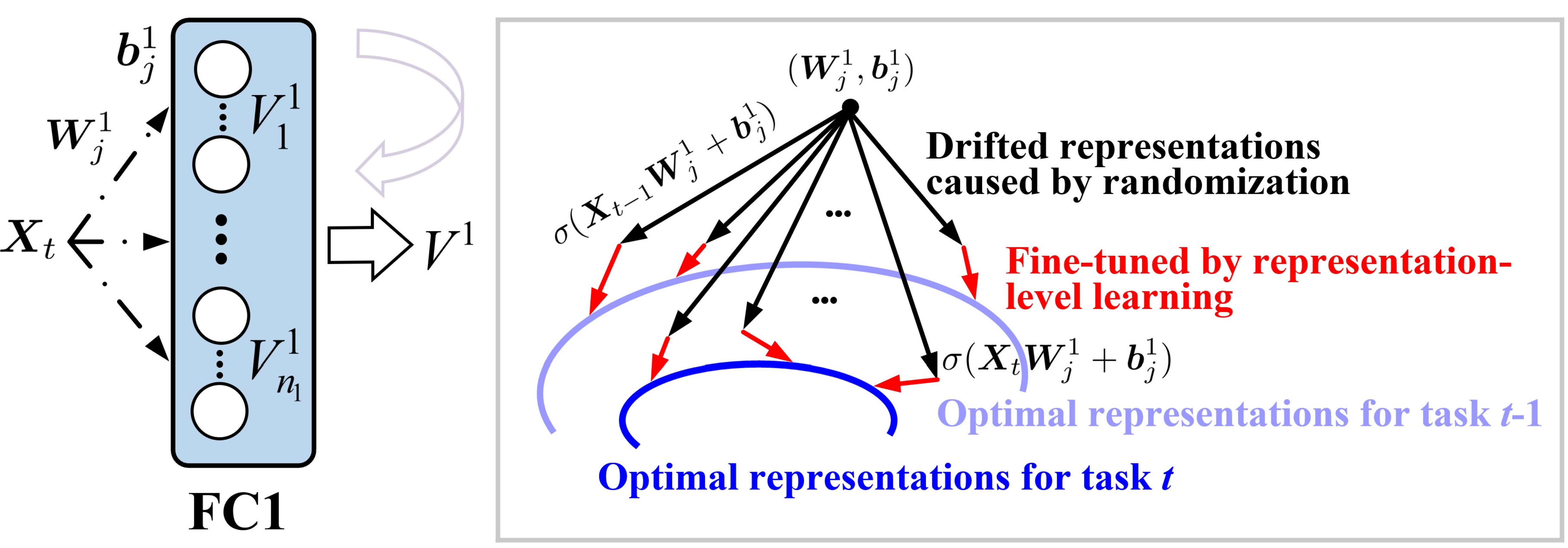}
\caption{ Randomization-based representation-level learning process. Take FC1 as an example. When $\bm{X}_t$ ($t=1,2,\dots$) is present, we first used the randomly assigned weights (dotted arrow in black, left) to potentially yield inferior responses called drifted representations (solid arrow in black, right). Then, they were tweaked by unsupervised fine-tuning back to their separate task-optimal working states (solid arrow in red, right). The forward representation-level learning can perform well on task sequences without weight modification. Best viewed in color.}
\label{Fig_RRL}
\end{figure}

To address this issue, we formulated it as an optimization problem, and the objective function is defined as:

\begin{equation}\label{Theta}
\tilde{\bm{\theta}} \in \argmin_{(\tilde{\bm{W}}^l_j,\tilde{\bm{b}}^l_j)} \Vert\bm{V}^{l+1}_j\tilde{\bm{W}}^l_j+1_{N_t}\tilde{\bm{b}}^l_j-\tilde{\bm{V}}^l\Vert^2_2+\Vert\tilde{\bm{W}}^l_j\Vert_{1} + \Vert\tilde{\bm{b}}^l_j\Vert_{1}
\end{equation}


\noindent where $\tilde{\bm{\theta}}=(\tilde{\bm{W}}^l_j,\tilde{\bm{b}}^l_j)$ is the intermediate solution, $\Vert\cdot\Vert_1$ and $\Vert\cdot\Vert_2$ are the $L_1$ and $L_2$ norm, respectively. Conceptually, (\ref{Theta}) can serve as a feature processor in hierarchical neural networks, where the randomized response $\bm{V}^{l+1}_j$ is used to approximate the input $\tilde{\bm{V}}^l$ by minimizing the reconstruction errors \cite{vincent2008extracting,TNNLS2017BLS}. Then, the drifted representation $\bm{V}^{l+1}_j$ can be tweaked by replacing raw $\bm{\theta}$ with fine-tuned $\tilde{\bm{\theta}}$, that is, $\tilde{\bm{V}}^{l+1}_j=\sigma(\tilde{\bm{V}}^l\tilde{\bm{W}}^l_j+1_{N_t}\tilde{\bm{b}}^l_j)$. Hence, this forward pass can manage to slightly fine-tune each bunch of randomized responses $\bm{V}^{l+1}_j$ ($j=1,2,\dots,n_{l+1}$) in an unsupervised manner but freeze the involved weight $\bm{\theta}=(\bm{W}^l_j,\bm{b}^l_j)$, such that the drifted representations of each task could always be dragged back to their separate/initial task-optimal working states $\tilde{\bm{V}}^{l+1}_j$ (see Fig. \ref{Fig_RRL}) and thus flawlessly reproduce representations of each task seen so far, as explained later in Section \ref{sec:4.2.2}. Meanwhile, we can recursively obtain the tweaked activations $\tilde{\bm{V}}^{l+1}=[\tilde{\bm{V}}^{l+1}_1,\tilde{\bm{V}}^{l+1}_2,\dots,\tilde{\bm{V}}^{l+1}_{n_{l+1}}]$ in layer $l+1$ by concatenating all $n_{l+1}$ groups of node blocks. We will validate the efficiency of multiple node blocks in Section \ref{Node_blocks}. The above randomization-based representation-level learning demonstrates that reused random weights (and biases) are feasible for obtaining optimal representations from task sequences.

The intuition behind representation-level learning is that most CL methods are driven by weight-level optimization, which is not necessarily the underlying way that connectionist models distinguish and remember knowledge. The involved weights in hidden layers can be transitional or disposable. In this sense, rather than judging CL performance with globally optimized weights, we can measure it with the current representations. Hence, the latter is comparable to some approaches for single-task learning. Consider the FashionMNIST as an example. Fig. \ref{Visualization} depicts the t-SNE visualization \cite{van2008visualizing} of the raw sample space and the final projected representations that correspond to the outputs of FC3, which indicates the good potential of randomization-based representation-level learning. The distinguishable representations obtained from a sequence of tasks are then used for decision-making in the final output layer.

\begin{figure}[tbp]
\centering
\subfloat[]{\includegraphics[height=0.85in]{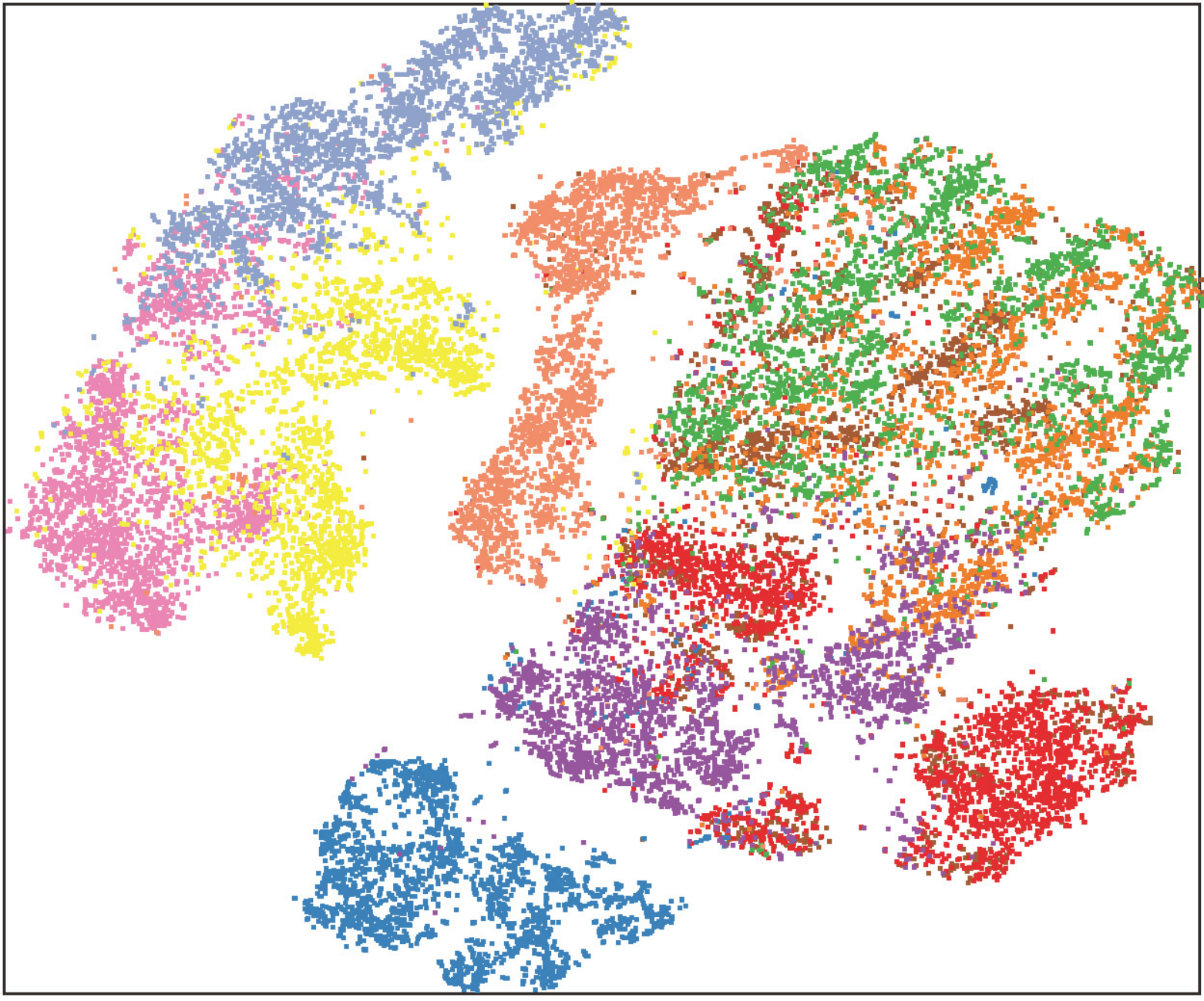}
	\label{Raw_split5}}
\hfil 
\subfloat[]{\includegraphics[height=0.85in]{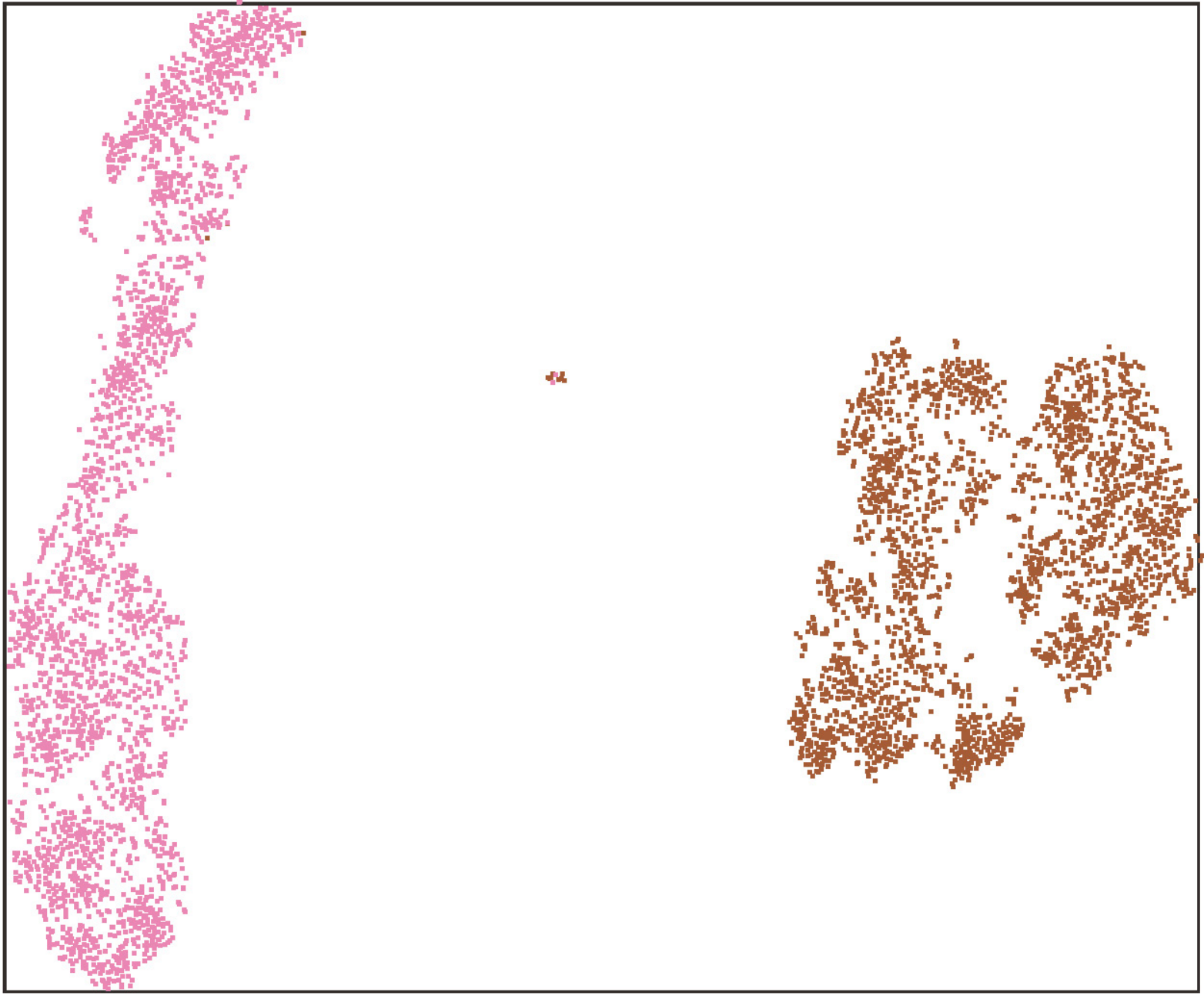}
	\label{Task1}}
\hfil
\subfloat[]{\includegraphics[height=0.85in]{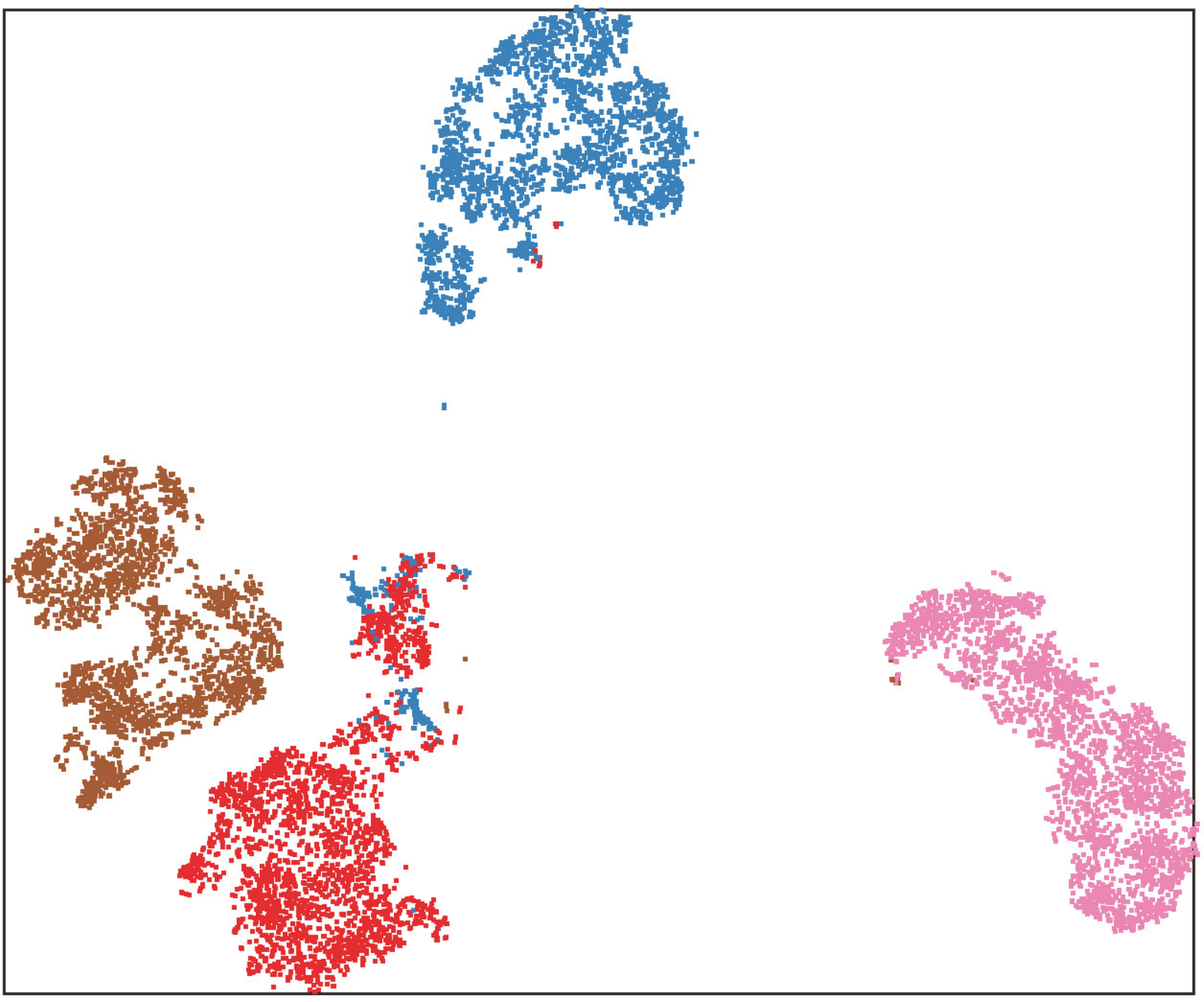}
	\label{Task12}}
\hfil
\subfloat[]{\includegraphics[height=0.85in]{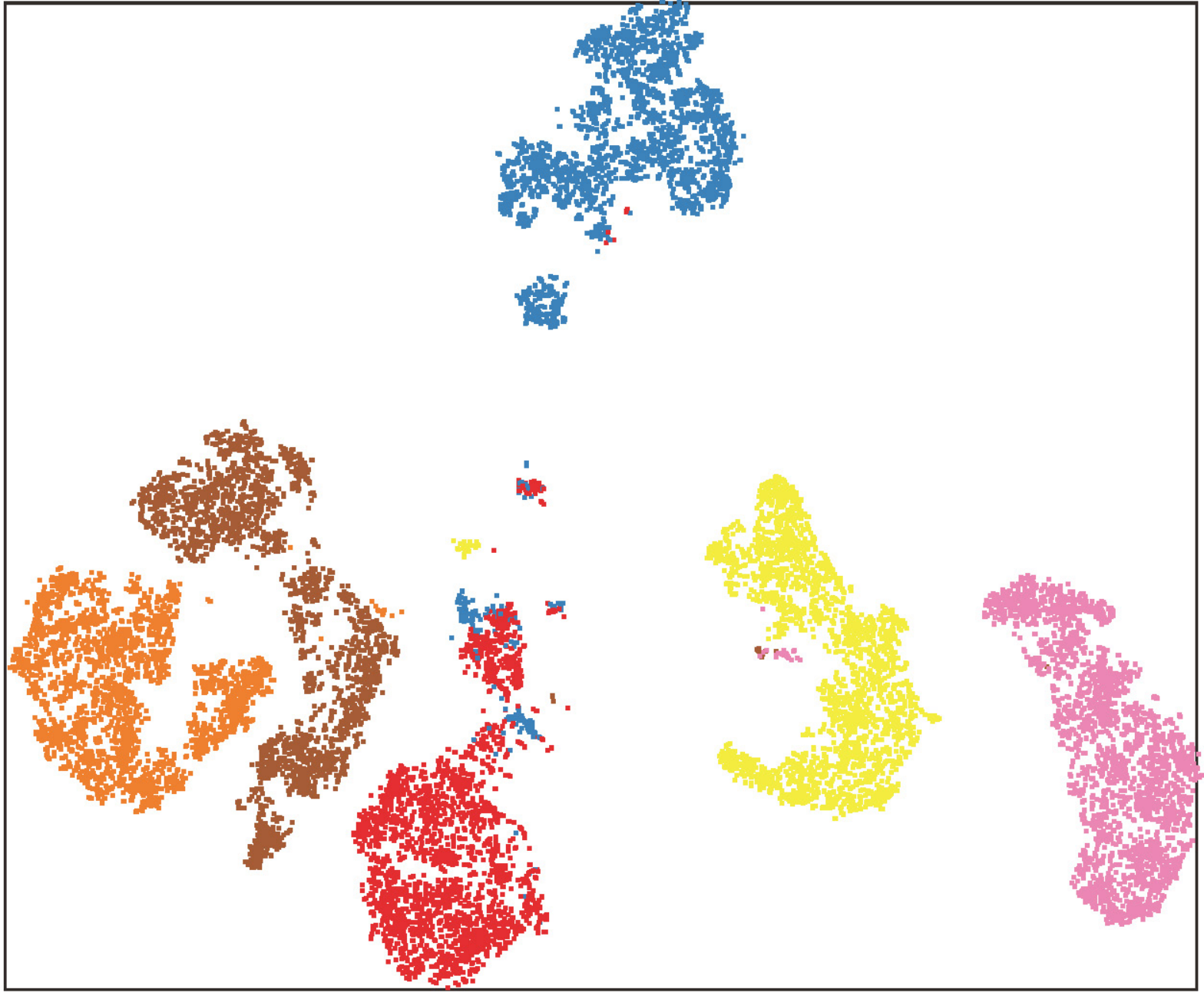}
	\label{Task123}}
\hfil
\subfloat[]{\includegraphics[height=0.85in]{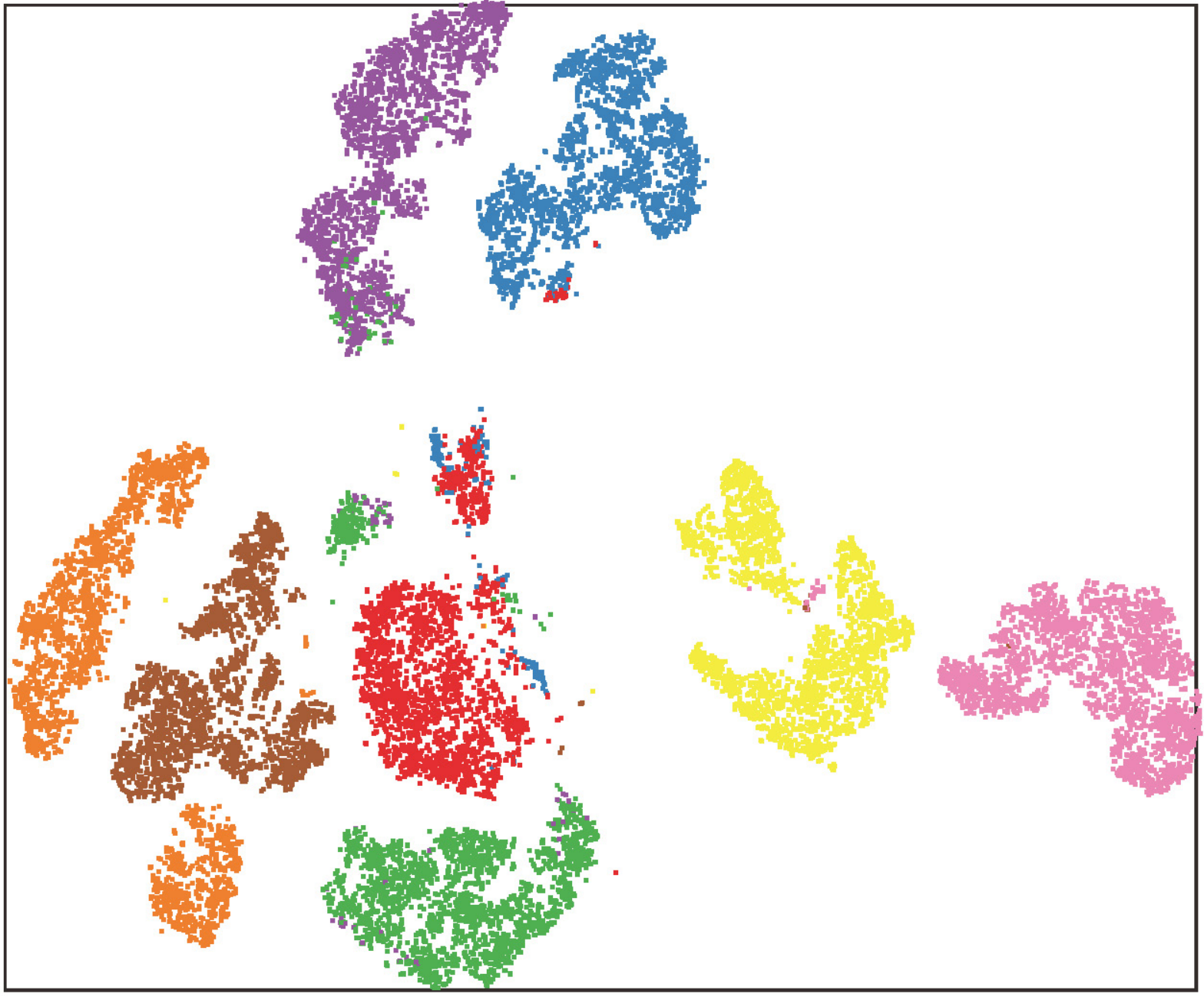}
	\label{Task1234}}
\hfil
\subfloat[]{\includegraphics[height=0.85in]{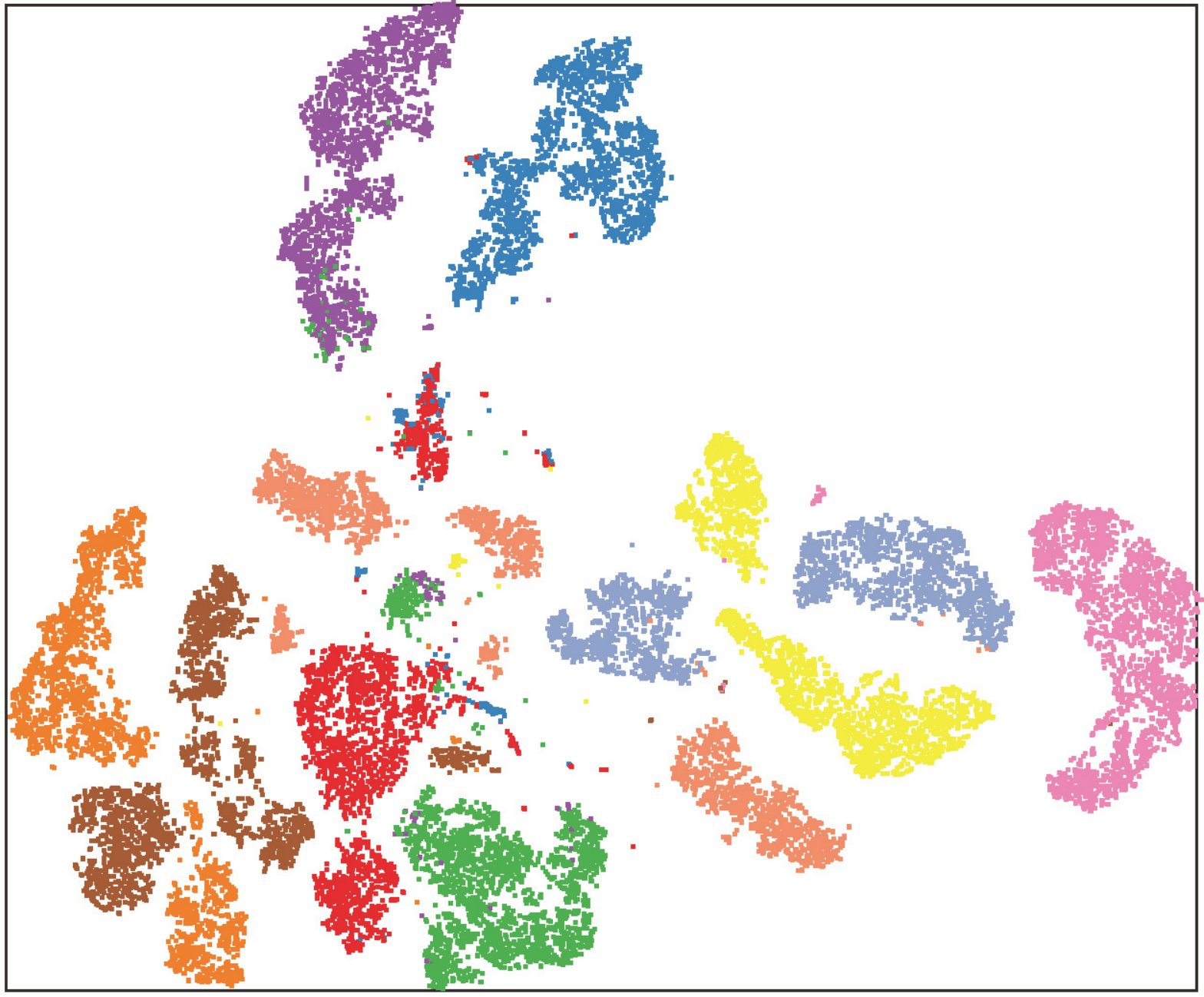}
	\label{Task12345}}
\caption{t-SNE visualization based on Split FashionMNIST benchmark. Each color represents a class, and two classes are incrementally fed into the hidden layers as an independent task. In this task sequence, training data of all the classes together are first visualized as a reference, followed by the tweaked representations after incrementally learning two classes per session. (a) Mixed raw sample space. (b)-(f) Well-clustered representation space that corresponds to the projected outputs of FC3. IF2Net can yield distinguishable representations from a sequence of tasks for the final decision-making. Best viewed in color.}
\label{Visualization}
\end{figure}

\subsubsection{Convergence analysis} \label{sec:4.2.2}
Here, we provided an iterative algorithm based on the discrete-time projection neural network \cite{xu2018discrete} to solve the optimization problem in (\ref{Theta}) and then provided a theoretical analysis to explain whether the proposed representation-level learning converges under certain conditions. 

Let $g(\bm{u})$: $\mathbb{R}^n \rightarrow \mathbb{R}^n$ be a projection operator defined on a closed convex set $\bm{\Omega}=\{\bm{u}\subseteq\mathbb{R}^n: l_i\leq u_i\leq h_i, i=1,2,\dots,n\}$ then,

\begin{equation}\label{g_ui}
g(u_i)=\left\{
\begin{aligned}
	h_i \quad &u_i>h_i\\
	u_i \quad &l_i\leq u_i\leq h_i\\
	l_i \quad &u_i<l_i\\
\end{aligned}
\right.
\end{equation}

\noindent which is a piecewise linear function. The property for  problem-solving is presented in the following lemma:

\begin{lemma}\cite{liu2015l_}\label{Lemma_1}
Assume there is a full-rank matrix $\bm{A}$. For any $\bm{x}\in\mathbb{R}^n$, $\bm{A}\bm{x}=\bm{b}$ if and only if $\bm{Z}\bm{x}=\bm{q}$, where $\bm{Z}=\bm{A}^\mathrm{T}(\bm{A}\bm{A}^\mathrm{T})^{-1}\bm{A}$ and $\bm{q}=\bm{A}^\mathrm{T}(\bm{A}\bm{A}^\mathrm{T})^{-1}\bm{b}$.
\end{lemma}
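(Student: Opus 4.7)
The plan is to prove both directions of the biconditional by exploiting the fact that $\bm{A}$ having full row rank makes $\bm{A}\bm{A}^\mathrm{T}$ symmetric positive definite (so the inverse in the statement is well-defined), and simultaneously makes $\bm{A}^\mathrm{T}$ injective. These two observations together are essentially all that is needed.

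First I would dispense with the forward direction. Assuming $\bm{A}\bm{x}=\bm{b}$, I would simply left-multiply both sides by the matrix $\bm{A}^\mathrm{T}(\bm{A}\bm{A}^\mathrm{T})^{-1}$, which is well-defined precisely because $\bm{A}$ has full row rank. The left-hand side becomes $\bm{A}^\mathrm{T}(\bm{A}\bm{A}^\mathrm{T})^{-1}\bm{A}\bm{x}=\bm{Z}\bm{x}$ and the right-hand side becomes $\bm{A}^\mathrm{T}(\bm{A}\bm{A}^\mathrm{T})^{-1}\bm{b}=\bm{q}$, giving $\bm{Z}\bm{x}=\bm{q}$.

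The converse direction is the substantive step. Starting from $\bm{Z}\bm{x}=\bm{q}$, I would rearrange to $\bm{A}^\mathrm{T}(\bm{A}\bm{A}^\mathrm{T})^{-1}(\bm{A}\bm{x}-\bm{b})=\bm{0}$. Setting $\bm{y}:=(\bm{A}\bm{A}^\mathrm{T})^{-1}(\bm{A}\bm{x}-\bm{b})$, this reads $\bm{A}^\mathrm{T}\bm{y}=\bm{0}$. Because $\bm{A}$ has full row rank, $\bm{A}^\mathrm{T}$ has full column rank and therefore has trivial null space, forcing $\bm{y}=\bm{0}$. Left-multiplying $\bm{y}=\bm{0}$ by the invertible matrix $\bm{A}\bm{A}^\mathrm{T}$ then yields $\bm{A}\bm{x}-\bm{b}=\bm{0}$, i.e.\ $\bm{A}\bm{x}=\bm{b}$.

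The only genuine subtlety, and the step I expect to require the most care, is pinning down what ``full-rank'' means in the statement. The existence of $(\bm{A}\bm{A}^\mathrm{T})^{-1}$ requires $\bm{A}$ to have full row rank (so in particular $\bm{A}$ must have at least as many columns as rows), and the injectivity argument for the converse also leans on this same hypothesis. I would make this assumption explicit at the start of the proof so that both directions go through uniformly; aside from that bookkeeping, no nontrivial calculation is needed.
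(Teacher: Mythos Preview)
Your proof is correct. The forward direction is immediate by left-multiplication, and your converse argument---rewriting $\bm{Z}\bm{x}=\bm{q}$ as $\bm{A}^\mathrm{T}\bm{y}=\bm{0}$ with $\bm{y}=(\bm{A}\bm{A}^\mathrm{T})^{-1}(\bm{A}\bm{x}-\bm{b})$ and then invoking the injectivity of $\bm{A}^\mathrm{T}$---is the natural and complete way to close the loop. Your remark that ``full-rank'' must mean full \emph{row} rank (so that $\bm{A}\bm{A}^\mathrm{T}$ is invertible and $\bm{A}^\mathrm{T}$ has trivial kernel) is exactly the right clarification.

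There is nothing to compare against in the paper itself: Lemma~1 is stated with a citation to an external reference and is not proved in the paper. Your argument stands on its own as a clean, self-contained proof of the cited result.
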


With these preliminaries, we can generally rewrite the objective function in (\ref{Theta}) as:

\begin{equation}\label{Theta_solution}
\arg\min_{\bm{x}^*}: \frac{1}{2}\Vert\bm{Z}\bm{x}^*-\bm{q}_t\Vert^2_2+\lambda\Vert\bm{\bm{x}^*}\Vert_1
\end{equation}

\noindent where $t \in\mathbb{N}^+$ is the task index, $\bm{q}_t$ is the desired input vector, $\bm{Z}$ is the drifted representations, $\lambda$ is a positive scalar weight, and $\bm{x}^*$ is the surrogate associated with $\tilde{\bm{\theta}}$. The convergence property of the proposed representation-level learning can be stated in the following theorem:

\begin{theorem}\label{Theorem_1}
For $t \in\mathbb{N}^+$, $\bm{x}^*\in\mathbb{R}^n$ is an optimal solution to problem (\ref{Theta_solution}) if and only if there exits $\bm{y}^*\in\mathbb{R}^n$ such that $\bm{x}^*$ and $\bm{y}^*$ satisfy:

\begin{equation}\label{Theorem_condition}
	\left\{
	\begin{aligned}
		&\bm{Z}\bm{x}^*-\bm{q}_t+\lambda\bm{y}^*=0\\
		&\bm{y}^*=g(\bm{y}^*+\bm{x}^*).\\
	\end{aligned}
	\right.
\end{equation}
\end{theorem}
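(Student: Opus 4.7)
The plan is to invoke first-order optimality from convex analysis: since the objective in~(\ref{Theta_solution}) is the sum of a convex smooth quadratic and the convex nonsmooth $\ell_1$ norm, $\bm{x}^*$ is a minimizer if and only if there exists $\bm{y}^*\in\partial\Vert\bm{x}^*\Vert_1$ such that $\bm{Z}^{\mathrm T}(\bm{Z}\bm{x}^*-\bm{q}_t)+\lambda\bm{y}^*=\bm{0}$. My job therefore reduces to two substeps: (i) simplifying the smooth-gradient term so that it matches the first equation of~(\ref{Theorem_condition}), and (ii) showing that the subdifferential inclusion $\bm{y}^*\in\partial\Vert\bm{x}^*\Vert_1$ is equivalent to the fixed-point identity $\bm{y}^*=g(\bm{y}^*+\bm{x}^*)$.

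For step (i), I would exploit the structure inherited from Lemma~\ref{Lemma_1}. There $\bm{Z}=\bm{A}^{\mathrm T}(\bm{A}\bm{A}^{\mathrm T})^{-1}\bm{A}$ is an orthogonal projector onto the row space of $\bm{A}$, hence symmetric and idempotent: $\bm{Z}^{\mathrm T}\bm{Z}=\bm{Z}^{2}=\bm{Z}$. In addition, $\bm{q}_t=\bm{A}^{\mathrm T}(\bm{A}\bm{A}^{\mathrm T})^{-1}\bm{b}_t$ lies in that same row space, so $\bm{Z}\bm{q}_t=\bm{q}_t$. Combining these two identities yields $\bm{Z}^{\mathrm T}(\bm{Z}\bm{x}^*-\bm{q}_t)=\bm{Z}\bm{x}^*-\bm{q}_t$, which is exactly the first equation of~(\ref{Theorem_condition}).

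For step (ii), I would use the standard componentwise characterization $\bm{y}\in\partial\Vert\bm{x}\Vert_1$ iff $y_i=\mathrm{sign}(x_i)$ whenever $x_i\neq 0$ and $y_i\in[-1,1]$ whenever $x_i=0$, and specialize the projection~(\ref{g_ui}) to $[l_i,h_i]=[-1,1]$. A short case analysis then finishes it: if $x_i^*>0$ then $y_i^*+x_i^*>1$ forces $g(y_i^*+x_i^*)=1$, matching $y_i^*=1$; the case $x_i^*<0$ is symmetric; and if $x_i^*=0$ the equation collapses to $y_i^*=g(y_i^*)$, which holds exactly when $y_i^*\in[-1,1]$. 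Each case reproduces the corresponding subdifferential condition, so the equivalence runs in both directions. Steps (i) and (ii) together deliver both implications of the theorem.

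The main obstacle, as I see it, is making step (ii) respect the generic bounds $l_i,h_i$ inherited from~(\ref{g_ui}); the equivalence sketched above tacitly uses the normalization $l_i=-1,h_i=1$, so I would need either to insert this as a standing assumption on $\bm{\Omega}$ or to rescale the box to accommodate the chosen regularization weight $\lambda$. A secondary subtlety is the reverse direction: from the two equalities in~(\ref{Theorem_condition}) one has to reconstruct the subdifferential inclusion and then appeal to the fact that, for proper convex losses and penalties, the KKT-style condition is not only necessary but also \emph{sufficient} for global optimality.
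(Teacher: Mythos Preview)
Your proposal is correct and follows essentially the same route as the paper: first-order optimality for the convex sum yields the stationarity condition with $\bm{y}^*\in\partial\Vert\bm{x}^*\Vert_1$, and the componentwise case analysis converts that inclusion into the fixed-point identity $\bm{y}^*=g(\bm{y}^*+\bm{x}^*)$ with $g$ taken as projection onto $[-1,1]^n$. Your step~(i), invoking the symmetry and idempotence of $\bm{Z}$ from Lemma~\ref{Lemma_1} to reduce $\bm{Z}^{\mathrm T}(\bm{Z}\bm{x}^*-\bm{q}_t)$ to $\bm{Z}\bm{x}^*-\bm{q}_t$, is a point the paper leaves implicit, and your ``main obstacle'' is resolved there simply by declaring $l_i=-1$, $h_i=1$ outright.
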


\begin{proof}\label{pfoof}
$\bm{x}^* \in\mathbb{R}^n$ is an optimal solution to problem (\ref{Theta_solution}) if and only if there exits $\bm{y}^*\in\mathbb{R}^n$ such that $\bm{x}^*$ and $\bm{y}^*$ satisfy the following condition:

\begin{equation}
	\bm{Z}\bm{x}^*-\bm{q}_t+\lambda\bm{y}^*=0,~t=1,2,\dots
\end{equation}

where $\bm{y}^*=[y_1^*,y_2^*,\dots,y_n^*]^\mathrm{T}$ is a subgradient of $\Vert\bm{\bm{x}^*}\Vert_1$ with each component $y_j^*$ $(j=1,2,\dots,n)$ defined as: 

\begin{equation}\label{y_star}
	y_j^*\left\{
	\begin{aligned}
		&=1 \quad &x_j^*>0\\
		&\in[-1,1] \quad &x_j^*=0\\
		&=-1 \quad &x_j^*<0\\
	\end{aligned}
	\right.
\end{equation}

Then, $\bm{y}^*$ satisfies: 

\begin{equation}
	\bm{y}^*=g(\bm{y}^*+\bm{x}^*)
\end{equation}

where $g(\cdot)$ is a projection operator from $\mathbb{R}^n \rightarrow [-1,1]^n$ as defined in (\ref{g_ui}) with $l_i=-1$ and $h_i=1$. $\hfill\square$
\end{proof}

Without loss of generality, the solution to (\ref{Theta_solution}) can be iteratively described as follows \cite{xu2018discrete}:

\begin{equation}\label{Theta_iteratrion}
\left\{
\begin{aligned}
	&\bm{x}(k+1)=\bm{x}(k)-\gamma(\bm{Z}\bm{x}(k))-\bm{q}_t+\lambda g(\bm{y}(k)+\bm{x}(k))\\
	&\bm{y}(k+1)=g(\bm{y}(k)+\bm{x}(k+1))\\
\end{aligned}
\right.
\end{equation}

\noindent where $\gamma > 0$ is the gain of the network, $\bm{x}(k)$ is the state vector, and $\bm{y}(k)$ is the output vector, respectively. 

The aforementioned equation theoretically analyzed how the proposed representation-level learning technique can converge the randomized responses of different tasks to their optimum values. We presented pseudo-codes of the representation-level learning process in \textbf{Algorithm \ref{alg_1}} to clarify the algorithm implementation. In addition, we noted that it achieves the same performance if (\ref{Theta_solution}) is solved in an alternative manner, such as by extending the fast iterative shrinkage-thresholding algorithm (FISTA) \cite{beck2009fast} to representation-level learning.

\begin{algorithm}[htb]
\caption{Randomization based Representation-Level Learning}
\label{alg_1}
\begin{algorithmic}[1]
	\Require 
	Datasets $\bm{D}_t=\{(\bm{X}_t,\bm{Y}_t)\}$ of task $t$; 
	A base network with $L$ hidden layers;
	Number of node blocks $n_l$ and nodes in each block $s_l$;
	The positive scalars $\lambda$ and $\gamma$. 
	
	\Ensure 
	Resulting representations of $L$th layer $\bm{V}^L$.
	
	\State \textit{\# initialize parameters from a uniform range}
	\For{$l=0, 1,\dots,L-1$}
	\For{$j=1, 2,\dots,n_{l+1}$}
	\State Randomly assign weights $\bm{W}^l_j$ and biases $\bm{b}^l_j$;  
	\EndFor
	\EndFor
	
	\State \textit{\# representation-level learning on sequential tasks}
	\For{$t=1, 2,\dots,T$}
	\For{$l=0, 1,\dots,L-1$}
	\State Obtain drifted representations $\bm{V}^{l+1}$ based on (\ref{Vl});
	\State Tweak drifted representations based on (\ref{Theta});		
	\EndFor
	\EndFor
\end{algorithmic}
\end{algorithm}

\subsection{Orthogonalization based Decision-Making} \label{Sec_ODM}
In the CL context, the final output layer, dubbed a classifier, must accommodate the changing input distributions or/and new unknown categories emerging in future tasks, which is vital to both previously learned and incoming tasks. We implemented an orthogonalization algorithm that theoretically achieved zero-forgetting decisions by making updated weights that do not disturb old knowledge while maintaining model plasticity. In the following, we explained why we expect to form a parsimonious orthogonal space for decision-making, how we leveraged orthogonalization and implemented output weight updates efficiently.

\subsubsection{Parsimonious orthogonal space}
Using the orthogonal gradient descent method, network weights can be sequentially optimized in the orthogonal space of all previously learned inputs/features for each FC layer \cite{he2018overcoming,NMI2019OWM,li2021gopgan}. However, because the trained weights are bounded to the counterpart space formed by the previous tasks, minimizing the mean squared error or cross-entropy loss would lead to two major problems:

First, layer-wise orthogonalization entails massive computational and memory requirements that are proportional to the number of network layers. It needs to store and update the projection matrix for each layer, accounting for why the orthogonal gradient descent algorithm sometimes requires sufficient memory to maintain space, especially in models with many network layers and nodes. More importantly, it is more vulnerable to increasing the generalization bound error during sequential training, which adversely affects the knowledge retention of previous tasks.

Second, layer-wise orthogonalization makes it difficult to find the optimal parameters when CL tasks are challenging. A standard gradient descent usually suffers from slow convergence and traps at a local minimum, let alone its orthogonal implementation, by updating the network weights along the constrained direction. Particularly, with the backpropagation of the entire network layer-by-layer, the constraint is likely to be greatly amplified, degenerating the new model’s ability to retain knowledge. 

Unlike these methods, we only applied it to the output weights for decision-making and refined the form of the orthogonal projection matrix, as presented in the subsequent section. Hence, the nature of parsimony mainly referred to two aspects. (i) For an individual task, it only works on the final output layer, as opposed to the commonly used layer-wise operations. This simplification is performed well by representation-level learning. In this way, our approach relaxed the computational and memory requirements, as we do not orthogonalize the weights of reused neurons in hidden layers and eliminated the inflexibility to the fullest. (ii) For the sequential tasks, it only cares for one orthogonal projector based on the representations of the penultimate layer. By contrast, layer-wise orthogonalization requires the projector at each layer to be delivered and updated during learning a new task. As a result, the parsimonious orthogonal space would be better for guaranteeing that the output weights are impervious relative to all the tasks seen.

\subsubsection{Orthogonal projection matrix}
We first presented the decision-making process with proof of its learning property without forgetting, followed by mathematically deriving the orthogonal projection matrix for modulating gradients. The details are as follows.

When finishing representation-level learning on task $t$ $(t=1,2,\dots)$, IF2Net can sequentially obtain distinguishable representations at FC3 termed $\tilde{\bm{V}}^3$. Hereafter, we denoted $\tilde{\bm{V}}^3$ by $\bm{V}_t\in\mathbb{R}^{N_t \times S_3}$ for simplicity, where the subscript $t$ denotes the task index and $S_3$ is the total number of nodes at FC3. In contrast to the softmax layer, we evaluated the output weights by solving a linear equation system. Hence, the learning objective for IF2Net is the following regularized least-squares minimization problem:

\begin{equation}\label{LSM}
\arg\min_{\bm{\beta}_t}: \Vert\bm{V}_t\bm{\beta}_t-\bm{Y}_t\Vert^2_F+\mu\Vert\bm{\beta}_t\Vert^2_F
\end{equation}

\noindent where $\Vert\cdot\Vert_F$ is the Frobenius norm, $\bm{\beta}_t$ is the output weight matrix, $\bm{Y}_t$ is the training sample label, and $\mu$ is the trade-off coefficient. Thus, the corresponding network prediction $\hat{\bm{Y}}_t=\bm{V}_t\bm{\beta}_t$ can be made at the final output layer by projecting the output weight updates $\Delta\bm{\beta}_t$ into the parsimonious orthogonal space, where the update direction is guided by an orthogonal projection matrix $\bm{P}_t$ (see Fig. \ref{Fig_ODM}). When learning a new task, old output weights are updated in the orthogonal direction of learned representations to avoid infringing upon the previously acquired information, even after experiencing a sequence of tasks. Specially,

\begin{equation}\label{OBP}
\bm{\beta}_t=\bm{\beta}_{t-1}-\eta \bm{P}_t\Delta\bm{\beta}_t
\end{equation}

\noindent where $\eta$ is the learning rate. Hence, during the decision-making process, let nonzero matrix $\bm{P}_t$ satisfy $\bm{A}_t\bm{P}_t=\bm{O}$ $(t=1,2,\dots)$, in which $\bm{A}_t=[\bm{V}_1^\mathrm{T},\bm{V}_2^\mathrm{T},\dots,\bm{V}_{t-1}^\mathrm{T}]^\mathrm{T}$ consists of all the previously tweaked representations. The mathematical proof is as follows:

\begin{figure}[tbp]
\centering
\includegraphics[width=1.0\columnwidth]{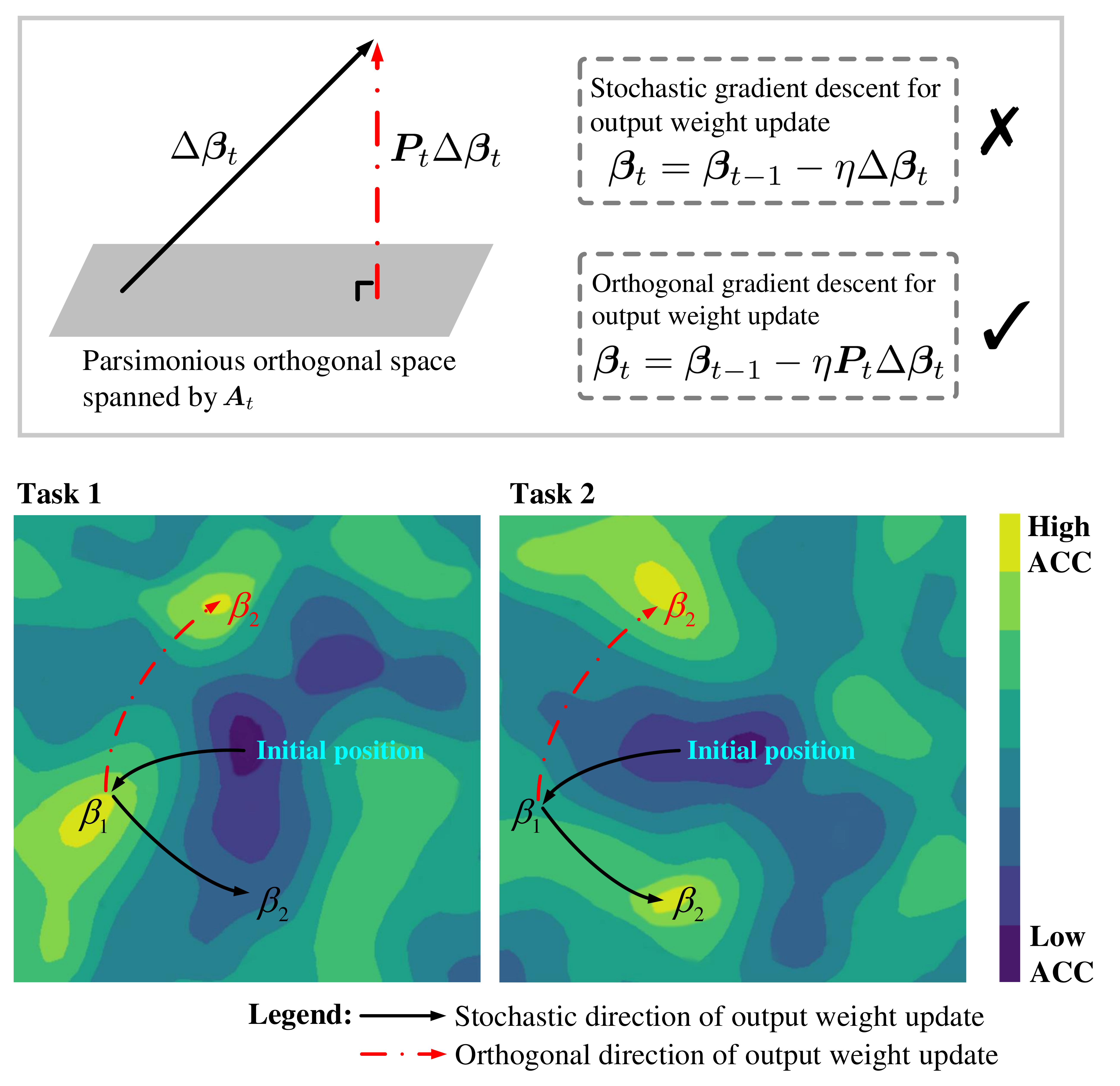}
\caption{Orthogonalization-based decision-making process. We first projected the output weight updates $\Delta\bm{\beta}_t$ of task $t$ into the parsimonious orthogonal space of all previous tasks, spanned by representations from the penultimate layer $\bm{A}_t$. Then, an orthogonal projection matrix $\bm{P}_t$ for modulating gradients guarantees output weights unaffected relative to all tasks seen. For example, the $\bm{\beta}_2$ updated by stochastic gradient descent failed to recognize task 1 while it would perform well on tasks seen by orthogonal gradient descent. Best viewed in color.}
\label{Fig_ODM}
\end{figure}

Suppose there have been $t$ ($t=1,2,\dots,T-1$) tasks seen so far, and when IF2Net is trained on $T$th task, its current prediction is $\hat{\bm{Y}}_T=\bm{V}_T\bm{\beta}_T$ in which $\bm{\beta}_T$ is updated with (\ref{OBP}) to maintain model plasticity on something new, and each $\bm{P}_t$ can be constructed in the following form:

\begin{equation}\label{P_form}
\begin{split}
	\bm{P}_t&=\bm{I}-\bm{A}_t^\mathrm{T}(\bm{A}_t\bm{A}_t^\mathrm{T}+\alpha\bm{I})^{-1}\bm{A}_t\\
	&=\alpha(\bm{A}_t^\mathrm{T}\bm{A}_t+\alpha\bm{I})^{-1}
\end{split}
\end{equation}

\noindent where $\alpha$ is a relatively small constant \cite{NMI2019OWM,li2021gopgan}. At the test time, the final model can predict $\hat{\bm{Y}}_{t}$ from any of the tasks 1 to $T-1$ without telling the task identifier, that is, 

\begin{equation}\label{proof}
\begin{split}
	\hat{\bm{Y}}_{t}=&\bm{V}_t\bm{\beta}_T\\
	=& \bm{V}_t(\bm{\beta}_{T-1}-\eta \bm{P}_T\Delta\bm{\beta}_T)\\
	=& \bm{V}_t(\bm{\beta}_t-\eta \sum_{k=t+1}^{T}\bm{P}_k\Delta\bm{\beta}_k)\\
	=& \bm{V}_t\bm{\beta}_t-\eta \sum_{k=t+1}^{T}\bm{V}_t\bm{P}_k\Delta\bm{\beta}_k\\
	=& \bm{V}_t\bm{\beta}_t
\end{split}
\end{equation}

\noindent Note that $\bm{A}_k\bm{P}_k=\bm{O}$ ($k=t+1,\dots,T$) and $\bm{V}_t$ belongs to $\bm{A}_k$. This completes the proof of the learning-without-forgetting decisions of IF2Net. $\hfill\square$

From the algorithm perspective, however, it is difficult to find an orthogonal projection matrix $\bm{P}_t$ that can always satisfy $\bm{A}_t\bm{P}_t=\bm{O}$ $(t=1,2,\dots)$. To make the computation of the projector more efficient, we used an iterative method to approximate the parsimonious orthogonal space spanned by $\bm{V}_t$, much akin to the recursive least squares (RLS) \cite{haykin2002adaptive,engel2004kernel}, which can be used to train feed-forward neural networks to achieve fast convergence. Thus, we regarded the first $k$ rows of $\bm{V}_t$ as a mini-batch; that is, $\bm{V}_t(k)=[\bm{v}_1,\bm{v}_2,\dots,\bm{v}_k]^\mathrm{T}$. Similarly, $\bm{V}_t(k+1)^\mathrm{T}\bm{V}_t(k+1)=\bm{V}_t(k)^\mathrm{T}\bm{V}_t(k)+\bm{v}_{k+1}^\mathrm{T}\bm{v}_{k+1}$. We denoted $\bm{P}_t(k)$ as the iterative form associated with $\bm{V}_t(k)$. Hence, (\ref{P_form}) can be expressed as:

\begin{equation}\label{P_k}
\begin{split}
	\bm{P}_t(k+1)&=\alpha(\bm{V}_t(k+1)^\mathrm{T}\bm{V}_t(k+1)+\alpha\bm{I})^{-1}\\
	&=\alpha(\bm{V}_t(k)^\mathrm{T}\bm{V}_t(k)+\bm{v}_{k+1}^\mathrm{T}\bm{v}_{k+1}+\alpha\bm{I})^{-1}
\end{split}
\end{equation}

\noindent Then, we applied the Woodbury matrix identity \cite{golub2013matrix} as a solution to (\ref{P_k}), as is presented below. Let $\bm{A}$ and $\bm{B}$ be two positive-definite $M\times M$ matrices related to 

\begin{equation}\label{Woodbury_A}
\bm{A}=\bm{B}^{-1}+\bm{C}\bm{D}^{-1}\bm{C}^\mathrm{T}
\end{equation}

\noindent where $\bm{D}$ is a positive-definite $N\times N$ matrix and $\bm{C}$ is an $M\times M$ matrix. Using the matrix inversion lemma, we obtained 

\begin{equation}\label{Woodbury_A_p}
\bm{A}^{-1}=\bm{B}-\bm{B}\bm{C}(\bm{D}+\bm{C}^\mathrm{T}\bm{B}\bm{C})^{-1}\bm{C}^\mathrm{T}\bm{B}
\end{equation}

\noindent By specifying $\bm{B}^{-1}=\bm{V}_t(k)^\mathrm{T}\bm{V}_t(k)+\alpha\bm{I}$ in (\ref{P_k}), we have $\bm{B}^{-1}=\alpha\bm{P}_t(k)^{-1}$. Furthermore, let $\bm{C}=\bm{v}_{k+1}^\mathrm{T}$ and $\bm{D}=\bm{I}$. According to (\ref{Woodbury_A})-(\ref{Woodbury_A_p}), we have

\begin{equation}\label{P_kk}
\bm{P}_t(k+1)=\bm{P}_t(k)-\dfrac{\bm{P}_t(k)\bm{v}_{k+1}^\mathrm{T}\bm{v}_{k+1}\bm{P}_t(k)}{\alpha\bm{I}+\bm{v}_{k+1}\bm{P}_t(k)\bm{v}_{k+1}^\mathrm{T}}
\end{equation}

Computing the orthogonal projector using (\ref{P_kk}) has two main strengths. (i) Iterative implementation in an efficient online manner avoids the matrix-inverse operation defined in (\ref{P_form}), which significantly facilitates the update process. (ii) Each batch is treated as a different task and breaks the limitation of storing representations from all previous tasks, for example, $\bm{V}_1,\bm{V}_2,\dots$, because only the currently learned $\bm{V}_t$ and the most recently updated $\bm{P}_t(k)$ are needed; therefore,
further accelerates the processing. \textbf{Algorithm \ref{alg_2}} illustrates the decision-making process of IF2Net.  

\begin{remark}\label{Remark1}
We noted that taking the “remembering” into consideration, the above orthogonalization-based decision-making can absorb a new input-to-output mapping without interfering with the previously learned ones. However, the process may be less effective for “learning.” This is because orthogonal regularization updates the output weights along the constrained direction, and it is potentially unfavorable to find the optima during sequential training, especially for task 1, which functions as the pre-training or initialization operation. To alleviate this, we retrospect the optimization problem in (\ref{LSM}), and a closed-form solution for parameter initialization is given by: 

\begin{equation}\label{beta_N}
	\bm{\beta}_1^\ast=(\bm{V}_1^\mathrm{T}\bm{V}_1+\mu\bm{I})^{-1}\bm{V}_1^\mathrm{T}\bm{Y}_1, N_1\geq S_3
\end{equation}

Alternatively, 

\begin{equation}\label{beta_L}
	\bm{\beta}_1^\ast=\bm{V}_1^\mathrm{T}(\bm{V}_1\bm{V}_1^\mathrm{T}+\mu\bm{I})^{-1}\bm{Y}_1, N_1 < S_3
\end{equation}

The computational complexity introduced by the matrix inversion operation can be circumvented by using either (\ref{beta_N}) or (\ref{beta_L}), depending on the sample size $N_1$ or total representation dimension $S_3$. Empirically, we can further start with a selected mini-batch to compute the closed-form solution, followed in the same way as the subsequent updates of the output weight $\bm{\beta}_t$ $(t\geq 2)$. In Section \ref{Initialization}, we demonstrated the effectiveness of this analytical initialization for output weight. Therefore, given the constrained update direction resulting from layer-wise orthogonalization, we simplified it to one layer for the final decision-making and introduced an effective initialization manipulation, eliminating the inflexibility to the fullest extent.
\end{remark}

\begin{algorithm}[htb]
\caption{Orthogonalization based decision-making}
\label{alg_2}
\begin{algorithmic}[1]
	\Require 
	Datasets $\bm{D}_t=\{(\bm{X}_t,\bm{Y}_t)\}$ of task $t$; 
	A classifier with a single-head output layer;
	The positive scalars $\mu$, $\eta$, and $\alpha$;
	Rrepresentations of penultimate layer $\bm{V}_t$.
	
	\Ensure 
	Network predictions on tasks seen so far $\hat{\bm{Y}}_t$.
	
	\State \textit{\# decision-making on sequential tasks}
	\For{$t=1, 2,\dots,T$}
	\If{$t=1$}
	\State Initialize $\bm{\beta}_1^\ast$ based on (\ref{beta_N}) or (\ref{beta_L});
	\State Iteratively approximate $\bm{P}_t$ based on (\ref{P_kk});	
	\Else	
	\State \textit{\# updates in the parsimonious orthogonal space}
	\State Orthogonalize gradient by $\bm{P}_t\Delta\bm{\beta}_t$;
	\State Update $\bm{\beta}_t$ based on (\ref{OBP});
	\State Iteratively approximate $\bm{P}_t$ based on (\ref{P_kk});
	
	\EndIf
	\EndFor
\end{algorithmic}
\end{algorithm}

\subsection{Integrating the Strengths of External Conditions}
To highlight the generality of our approach, we further explored IF2Net's compatibility with prior work and leveraged them as external conditions, such as replay-, regularization-, and dynamic architecture-based strategies, achieving a marginal boost in learning performance. Given space limitations, we considered the typical EWC algorithm as an example and rename it IF2Net-EWC. Both IF2Net and IF2Net-EWC adopted the same representation-level learning in the hidden layers but slightly different policies for updating the output weights during sequential training.

Especially, suppose there have been $T-1$ $(T\geq 2)$ tasks so far. When the $T$th task is presented, the improved learning objective for IF2Net-EWC can be represented as:

\begin{equation}\label{LSM_EWC}
	\begin{split}
		\arg\min_{\bm{\beta}_T}\!&: \frac{1}{2N_T}\!\Vert\bm{V}_T\bm{\beta}_T\!-\!\bm{Y}_T\Vert^2_F\!+\!\frac{\mu}{2}\!\sum_{t=1}^{T-1}\!\Vert\bm{Q}_t\odot(\bm{\beta}_T\!-\!\bm{\beta}_{T-1})\Vert^2_F \\
		s.t.\!&: \bm{\beta}_T=\bm{\beta}_{T-1}\!-\!\eta \bm{P}_T\Delta\bm{\beta}_T
	\end{split}
\end{equation}

\noindent where $N_T$ is the number of current training samples, $\odot$ is the element-wise product, $\bm{\beta}_{T-1}$ is the most recently learned output weight, and $\bm{\mathcal{F}}_t=\bm{Q}_t\odot\bm{Q}_t$ is the Fisher information matrix that indicates the parameter importance, as is formulated below. 

\begin{equation}\label{myEFIM}
	\bm{\mathcal{F}}_t=\frac{1}{N_t}\sum_{p=1}^{N_t}\nabla_{\bm{\bm{\beta}}_T}\log p(\bm{\bm{\beta}}_T|\bm{D}_t)\nabla_{\bm{\bm{\beta}}_T}\log p(\bm{\bm{\beta}}_T|\bm{D}_t)^\mathrm{T}
\end{equation}

\noindent Thus, IF2Net-EWC retains the knowledge of old tasks by constraining the output weights to stay in a low-error region while learning a new task, which only allows the new task to change its output weights that are not important for old tasks in the parsimonious orthogonal space. In other words, each output weight with small importance does not significantly affect the performance and can, therefore, be changed along the orthogonal direction to minimize the objective function, while, ideally, it should be left unchanged with large importance. Moreover, $\bm{D}_t$ no longer needs to be revisited or stored once $\bm{\mathcal{F}}_t$ is obtained. In summary, IF2Net-EWC further incorporates the update quota according to the parameter importance to implement selective weight consolidation compared to IF2Net.

\begin{remark}\label{Remark2}
	Intuitively, only a single term should be maintained and anchored at the output weights associated with the latest task $T-1$ because the most recently learned weights $\bm{\beta}_{T-1}$ inherit the previous weights $\bm{\beta}_t$ $(t=1,2,\dots,T-2)$. Alternatively, multiple penalty terms can be incorporated into (\ref{LSM_EWC}) by replacing $\bm{\beta}_{T-1}$ with $\bm{\beta}_t$. The latter forces a model to remember older tasks more vividly by double counting the data from previous tasks, which might compensate for the fact that older tasks are more difficult to remember \cite{huszar2017quadratic}. As a price, the algorithm exhibits linear growth in memory requirements as the number of tasks increases. Hence, we combined only IF2Net with a single term, as formulated in (\ref{LSM_EWC}).
\end{remark}

\subsection{Discussion} \label{Sec_Discussion}
\subsubsection{Can we directly use NN-RW for CL?} \label{Discussion_1}
As described in the related work section, NN-RW randomly assigns input weights (and biases), and only the output weights need to be tuned during the single-task learning process. Although some useful strategies, such as the supervisory mechanism in SCN \cite{wang2017stochastic}, and the \textit{lasso} algorithm in BLS \cite{TNNLS2017BLS}, have been introduced to better leverage the randomization nature, NN-RW and its variants are data-dependent and mainly perform well on the IID condition, making it unfeasible in the CL context. Specifically, the randomized learning technique can be used to extract discriminative features for a given task, but it fails to work in a sequence of tasks. This is because the discriminative information learned for a new task may not be sufficiently discriminative between old tasks and between old and new classes \cite{AAAI2021PCL}. More importantly, NN-RW with task-specific output weights limits its application to the single-task learning process, and no work has been presented to guide the updates of output weights to address forgetting. Therefore, further improvements are needed to extend NN-RW to the CL context, and our work fills this gap to a certain degree.

\subsubsection{Relationship between IF2Net and orthogonal gradient decent methods}  \label{Discussion_2}
We now discuss the relationship between the proposed approach, IF2Net, and the representative orthogonal gradient descent method, OWM \cite{NMI2019OWM}. Both serve the same purpose of defying catastrophic forgetting within neural networks. However, there are three main differences between IF2Net and OWM, which yield substantial performance distinctions. (i) Rather than orthogonal regularization updates of all the network weights layer-by-layer, we proposed leveraging randomization-based representation-level learning in several hidden layers, as discussed in Section \ref{Sec_RRL}. It is theoretically analyzed that the proposed technique can make the randomized responses of different tasks converge to their separate optima. (ii) We then applied orthogonalization to the final output layer for decision-making because the feature map extracted from deeper layers is more likely to contain task-specific information, and the deeper layer can easily forget previous knowledge \cite{tang2022learning}. Particularly, we projected the output weight updates into the parsimonious orthogonal space spanned by the obtained response from the preceding layer, that is, our method only needs to maintain one orthogonal projection matrix, while the number of projectors in OWM is equal to that of the network layers. Our method, together with a closed-form solution for parameter initialization, can facilitate the convergence process and eliminate inflexibility in original layer-wise practice. (iii) Our approach was compatible with most, if not all, CL methods because of its generality, meaning that one can easily combine IF2Net with the respective techniques in the existing CL methods to achieve new state-of-the-art results.

\subsubsection{Comparison through the lens of generalization bound} \label{Discussion_3}
We also discussed the generalization performance to understand why our method is better than OWM. Based on Rademacher complexity \cite{kakade2008complexity}, two supportive lemmas are presented to explain our intuitive argument, that is, orthogonalization-based decision-making in the final output layer is more prone to result in a lower generalization bound error during sequential training. 

This section focused on Rademacher complexity, which is a standard tool for binding the generalization error (and hence the sample complexity) of given classes of predictors \cite{bartlett2002rademacher,Shalev2014Understanding}. Formally,
given a real-valued function class $\mathcal{H}$ and set of data points $x_1,\dots,x_m \in \bm{X}$, we defined the (empirical) Rademacher complexity $\mathcal{\hat{R}}_m(\mathcal{H})$ as:
\begin{equation}\label{Rademacher_def}
	\mathcal{\hat{R}}_m(\mathcal{H}) = \mathbb{E}_{\bm{\epsilon}} \bigg[\sup_{h\in\mathcal{H}}\frac{1}{m}\sum_{i=1}^{m}h(x_i)\epsilon_i\bigg]
\end{equation}

\noindent where $\bm{\epsilon}=(\epsilon_1,\dots,\epsilon_m)$ is a vector uniformly distributed in \{-1,+1\} and $m$ is the sample size. Bartlett and Mendelson \cite{bartlett2002rademacher} provided the following generalization bound for the Lipschitz loss functions: 

\begin{lemma}\cite{bartlett2002rademacher}\label{Lemma_2}
	Assume the loss $\ell$ is Lipschitz (with respect to its first argument) with the Lipschitz constant $L_\ell$ and that $\ell$ is bounded by $c$. For any $\delta>0$ and with a probability of at least $1-\delta$ simultaneously for all $h\in\mathcal{H}$, we have
	
	\begin{equation*}
		\mathcal{L}(h)\leq \hat{\mathcal{L}}(h)	+ 2L_\ell \mathcal{\hat{R}}_m(\mathcal{H}) + c\sqrt{\frac{\log(1/\delta)}{2m}}
	\end{equation*}
	where $\mathcal{L}(h)=\mathbb{E}[\ell (h(x),y)]$ is the expected loss of $h$ and $\hat{\mathcal{L}}(h)=\frac{1}{m}\sum_{i=1}^{m}\ell(h(x_i),y_i)$ is the empirical loss. 	
\end{lemma}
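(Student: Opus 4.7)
The plan is to follow the classical three-step recipe for Rademacher-based uniform convergence bounds: concentration, symmetrization, and contraction. First I would introduce the one-sided deviation functional
\[\Phi(S) = \sup_{h \in \mathcal{H}}\bigl(\mathcal{L}(h) - \hat{\mathcal{L}}(h)\bigr)\]
on the training sample $S = \{(x_1,y_1),\dots,(x_m,y_m)\}$. Because $\ell$ is bounded by $c$, replacing any single coordinate of $S$ alters $\hat{\mathcal{L}}(h)$ by at most $c/m$ uniformly in $h$, so $\Phi$ satisfies the bounded-difference property with constants $c/m$. Applying McDiarmid's inequality yields, with probability at least $1-\delta$,
\[\Phi(S) \le \mathbb{E}_S[\Phi(S)] + c\sqrt{\tfrac{\log(1/\delta)}{2m}},\]
which already accounts for the last additive term of the claimed bound.

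Second, I would control $\mathbb{E}_S[\Phi(S)]$ by the standard symmetrization argument. Introducing an i.i.d.\ ghost sample $S' = \{(x_i',y_i')\}_{i=1}^m$, rewriting $\mathcal{L}(h) = \mathbb{E}_{S'}[\hat{\mathcal{L}}_{S'}(h)]$, pulling the supremum inside the outer expectation via Jensen, and inserting i.i.d.\ Rademacher signs $\epsilon_i$ that swap corresponding pairs $(x_i,y_i) \leftrightarrow (x_i',y_i')$, one obtains
\[\mathbb{E}_S[\Phi(S)] \le 2\,\mathbb{E}_{S,\bm{\epsilon}}\!\left[\sup_{h \in \mathcal{H}} \frac{1}{m}\sum_{i=1}^m \epsilon_i\,\ell(h(x_i),y_i)\right],\]
which is $2\,\mathcal{R}_m(\ell\circ\mathcal{H})$. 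This transfers the sample complexity onto the Rademacher average of the composed loss class.

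Third, I would invoke the Ledoux--Talagrand contraction principle: because, for each fixed $y_i$, the map $z \mapsto \ell(z,y_i)$ is $L_\ell$-Lipschitz, contraction gives $\mathcal{\hat{R}}_m(\ell\circ\mathcal{H}) \le L_\ell\,\mathcal{\hat{R}}_m(\mathcal{H})$ pointwise in $S$, and hence the same inequality after taking expectations. Chaining the three steps produces the stated bound $\mathcal{L}(h) \le \hat{\mathcal{L}}(h) + 2L_\ell\,\mathcal{\hat{R}}_m(\mathcal{H}) + c\sqrt{\log(1/\delta)/(2m)}$.

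The delicate point, and the main obstacle to matching the statement exactly, is that the lemma uses the \emph{empirical} Rademacher complexity $\mathcal{\hat{R}}_m(\mathcal{H})$ rather than its expectation $\mathcal{R}_m(\mathcal{H})$. The cleanest route is a second bounded-difference step: view $\mathcal{\hat{R}}_m(\mathcal{H})$ itself as a function of $S$, verify that it too has bounded differences of order $1/m$, apply McDiarmid a second time to pass between the empirical and expected versions, and absorb the extra $O(\sqrt{\log(1/\delta)/m})$ slack via a union bound at level $\delta/2$ (the constant $c$ in the statement is loose enough to hide this). The Lipschitz hypothesis is exactly what powers the contraction transition onto $\mathcal{H}$ itself, whereas the boundedness hypothesis is what drives both McDiarmid applications.
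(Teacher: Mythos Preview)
The paper does not supply its own proof of this lemma: it is quoted verbatim as a cited result from Bartlett and Mendelson~\cite{bartlett2002rademacher} and used as a black box in the subsequent discussion of generalization under orthogonal gradient descent. There is therefore nothing in the paper to compare your argument against beyond the external reference.

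That said, your three-step outline (McDiarmid on the uniform deviation, symmetrization with a ghost sample, Ledoux--Talagrand contraction) is exactly the standard route in the cited source and in textbook treatments, and is correct. Your identification of the empirical-versus-expected Rademacher complexity as the only real wrinkle is also on point; the second bounded-difference step you propose is the usual fix. The one place to be careful is the constant bookkeeping: splitting $\delta$ into two halves for the two McDiarmid applications replaces $\log(1/\delta)$ by $\log(2/\delta)$, so the displayed constant does not survive literally unless one is willing to absorb the extra $\sqrt{\log 2}$ into a slightly inflated~$c$. This is a cosmetic issue, not a gap in the argument.
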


We can introduce a bound on the Rademacher complexity term using \textbf{\textit{Lemma \ref{Lemma_2}}}.

\begin{lemma}\cite{golowich2018size}\label{Lemma_3}
	Let $\mathcal{H}_d$ be the class of real-valued networks of depth $d$ over the domain $\bm{X}=\{x:\Vert x\Vert\leq B\}$ in Euclidean space, where each parameter matrix $\bm{W}_j$ has a Frobenius norm at most $M_F(j)$, and with activation functions satisfying \textit{Lemma 1} in \cite{golowich2018size}. Then,
	
	\begin{equation*}
		\begin{split}
			\mathcal{\hat{R}}_m(\mathcal{H}_d) &\leq \frac{1}{m}\prod_{j=1}^{d}M_F(j)\cdot\bigg(\sqrt{2\log(2)d}+1\bigg)\sqrt{\sum_{i=1}^{m}\Vert x_i\Vert^2}\\
			&\leq \frac{B\bigg(\sqrt{2\log(2)d}+1\bigg)\prod_{j=1}^{d}M_F(j)}{\sqrt{m}}.
		\end{split}
	\end{equation*}
	where $\bm{W}_1,\dots,\bm{W}_d$ in each of the $d$ layers have the Frobenius norm $\Vert\cdot\Vert_F$ upper bounded by $M_F(1),\dots,M_F(d)$, respectively. 
\end{lemma}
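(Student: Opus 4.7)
The plan is to follow a layerwise peeling strategy, where I upper bound $\mathcal{\hat{R}}_m(\mathcal{H}_d)$ by successively stripping off one layer at a time until only the input data remain. A naive application of the Ledoux--Talagrand contraction inequality at every layer picks up a factor of $M_F(j)$ per layer but also yields a multiplicative factor of $2$ each time, giving an undesirable $2^d$ blow up. The whole point of the bound above is that the depth dependence is only $\sqrt{d}$, so the main work lies in replacing that naive peeling with an exponential-moment argument, following Golowich, Rakhlin, and Shamir.

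More concretely, first I would rewrite the supremum inside the Rademacher average using a moment generating function: for any $\lambda>0$,
\begin{equation*}
\exp\!\bigl(\lambda\, m\,\mathcal{\hat{R}}_m(\mathcal{H}_d)\bigr) \;\le\; \mathbb{E}_{\bm\epsilon}\!\sup_{h\in\mathcal{H}_d} \exp\!\Bigl(\lambda\sum_{i=1}^{m}\epsilon_i h(x_i)\Bigr),
\end{equation*}
which follows from Jensen's inequality applied to $\exp(\lambda\cdot)$. Then I would peel the outermost layer: write $h(x) = \sigma(\bm{w}_d^\top h'(x))$ for some $h' \in \mathcal{H}_{d-1}$ (vector-valued), use that $\sigma$ is $1$-Lipschitz with $\sigma(0)=0$, and bound the weight vector $\bm{w}_d$ by duality by its Frobenius norm $M_F(d)$. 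Because the contraction is inside the exponential, I can invoke the Ledoux--Talagrand contraction lemma only once on $\bm\epsilon$ without the factor of $2$ multiplying in, and the supremum over $\bm{w}_d$ collapses to $M_F(d)$ times the Euclidean norm of $\sum_i \epsilon_i h'(x_i)$. Applying Jensen again to push the expectation inside, I obtain an inequality of the same form but for a depth-$(d-1)$ network, with an extra factor $M_F(d)$ and an extra $\lambda^2$-type penalty coming from passing through the MGF.

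Iterating this peeling $d$ times reduces the bound to an expression involving $\prod_{j=1}^d M_F(j)$ and the exponential moment of $\lambda\,\|{\sum_i \epsilon_i x_i}\|$, which I can control by a standard sub-Gaussian calculation giving $\exp\!\bigl(\tfrac{\lambda^2}{2}\sum_i \|x_i\|^2\bigr)$. Taking logarithms and dividing by $\lambda m$ yields
\begin{equation*}
\mathcal{\hat{R}}_m(\mathcal{H}_d) \;\le\; \frac{1}{\lambda m}\log 2 \;+\; \frac{\lambda}{2m}\prod_{j=1}^{d} M_F(j)^2 \sum_{i=1}^{m}\|x_i\|^2 \cdot (\text{depth factor}),
\end{equation*}
and optimizing over $\lambda$ produces the $\sqrt{2\log(2)d}+1$ prefactor, giving the first claimed inequality. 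The second inequality is then immediate: the assumption $\|x\|\le B$ gives $\sum_{i=1}^m \|x_i\|^2 \le mB^2$, and $\sqrt{mB^2}/m = B/\sqrt{m}$.

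The hard part, as already hinted, is tracking the constants through the repeated peeling so that the depth dependence is $\sqrt{d}$ rather than exponential in $d$. The book keeping is delicate: one has to keep the contraction operating inside the exponential (hence the introduction of $\lambda$ from the outset), invoke Jensen in the right direction at each step, and only optimize $\lambda$ at the very end after all $d$ layers have been absorbed into a single quadratic in $\lambda$. Any premature application of contraction outside the MGF, or any premature optimization of $\lambda$ between layers, loses the sharpness and reintroduces a factor that grows with $d$.
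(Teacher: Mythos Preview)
The paper does not prove this lemma at all: it is quoted verbatim as a cited result from Golowich, Rakhlin, and Shamir (reference \cite{golowich2018size}), and the authors use it only as a black box to argue heuristically that fewer orthogonalized layers yield a smaller Rademacher complexity term. So there is no ``paper's own proof'' to compare against.

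That said, your sketch is essentially the argument from the cited source, and the high-level strategy is sound: pass to the exponential moment via Jensen, peel one layer at a time so that each contraction step happens \emph{inside} the MGF and therefore costs only a factor $M_F(j)$ rather than $2M_F(j)$, and optimize $\lambda$ only after all $d$ layers have been absorbed. Two small points of imprecision are worth flagging. First, the displayed inequality you wrote after ``taking logarithms'' is not quite the form one actually gets: the $\log 2$ term arises $d$ times (once per peeling step, from bounding $\|\cdot\|$ by a supremum over sign choices), which is exactly why the optimized prefactor is $\sqrt{2\log(2)\,d}$ rather than $\sqrt{2\log 2}$. Second, the ``extra $\lambda^2$-type penalty'' you mention does not appear layer by layer; the quadratic in $\lambda$ only enters once, at the very end, when you bound the MGF of $\|\sum_i \epsilon_i x_i\|$ at the input layer. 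Keeping these two bookkeeping points straight is precisely the ``delicate'' tracking you allude to, and once corrected your outline reproduces the Golowich--Rakhlin--Shamir proof faithfully.
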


Let us consider these lemmas in the continual learning context. Suppose that $t$ ($t=1,2,\dots,T-1$) tasks have been trained. During sequential training, one can ensure that the empirical loss $\hat{\mathcal{L}}(h)$ on the right-hand side of \textbf{\textit{Lemma \ref{Lemma_2}}} is unchanged by the orthogonal decent gradient method. However, the Rademacher complexity term $\mathcal{\hat{R}}_m(\mathcal{H}_d)$ can still increase for task $t$ when we update network weights at task $T$. Thus, while networks do not forget what was learned, the generalization error for the previous tasks could potentially increase when weights are updated for later tasks. Moreover, we can use the bound of the Rademacher complexity term in \textbf{\textit{Lemma \ref{Lemma_3}}}. For our method, $d = 1$ and domain $\bm{X}$ refer to the last hidden layer, whereas for OWM, $d > 1$ and domain $\bm{X}$ refer to the input layer. As a result, this leads to a major difference in how much they increase the Rademacher complexity term for task $t < T - 1$ during training for task $T$. From \textbf{\textit{Lemma \ref{Lemma_3}}}, it is expected that the proposed IF2Net does not significantly increase the Rademacher complexity term compared to OWM. We validated this hypothesis by plotting the value of the Rademacher complexity term for our method and OWM in Section \ref{Rademacher_complexity}.

\section{Experiments}
In this section, we presented extensive experiments to validate the superiority of the proposed approach using three evaluation metrics, six standard benchmark datasets, and 17 baseline methods in the CIL scenario. First, we introduced the experimental setting. We then provided the experimental results and discussion, following which we empirically analyzed the effectiveness of the core designs in our algorithm.

\subsection{Experimental Setting}
\textbf{Protocols.} 
For a fair comparison, we carefully selected the compared methods in the same environment. (i) Only samples of current task $t$ are available, except for replay-based methods, while test samples may come from any of tasks 1 to $t$ at test time, during which task identity is unknown. (ii) Without using a fixed task sequence, we ran each benchmark multiple times with randomly shuffled task orderings and then reported the means and/or standard deviations of these results. Hence, repeated runs will enter tasks with agnostic orders prior to training, which is more practical in an open-ended environment. (iii) We gave preference to the most well-recognized and best-performing baseline methods in the “single-head” setting and referred to the original codebases for implementation and hyperparameter selection to ensure the best possible performance.

\textbf{Datasets.}
We followed the common practice in the CIL scenario to simulate changing input distributions and emerging new classes by splitting benchmark datasets \cite{tang2022learning,AAAI2021PCL,van2020replay,wang2022learning,van2019three,hsu2018re,CVPR2021EFT}, including MNIST, FashionMNIST, Stanford Dogs, CUB-200, CIFAR-100, and ImageNet-Subset. For convenience, we used the nomenclature [DATASET]-$C/T$ to denote a task sequence with $C$ classes evenly divided into $T$ tasks in the CIL scenario; for example, the suffix indicates that a model needs to recognize $C/T$ new classes in each task. 


\textbf{Compared methods}.
We compared our approach with both classic and the latest CL baselines, which covered GEM \cite{NIPS2017GEM}, LOGD \cite{CVPR2021LOGD}, IL2M \cite{ICCV2019IL2M}, FS-DGPM \cite{deng2021flattening}, ARI \cite{wang2022anti} from replay-based approaches, EWC \cite{PNAS2017EWC}, SI  \cite{ICML2017SI}, MAS \cite{ECCV2018MAS}, OL-EWC \cite{ICML2018Online_EWC}, BiC \cite{wu2019large}, OWM \cite{NMI2019OWM} from regularization-based approaches, and RPS-Net \cite{NIPS2019RPS-Net}, EFT \cite{CVPR2021EFT}, PCL \cite{AAAI2021PCL} from dynamic architecture-based approaches. Additionally, we compared it with a representative NN-RW \cite{TNNLS2017BLS}, a naive approach of simply fine-tuning each new task (\textit{None}; can be seen as lower bound), and a network that is always trained using the data of all tasks so far (\textit{Joint}; can be seen as upper bound).

\begin{table*}[htbp]
	\renewcommand{\arraystretch}{1.5}  
	\caption{Performance comparisons on MNIST-10/5 and FashionMNIST-10/5 both measured by three evaluation metrics. All the results are (re) produced under 5 runs, with the mean and standard deviation reported. We also mark the best results in \textbf{Bold} and second-best results in \textbf{\textit{Italic}}.}
	\label{Table_MNIST_FMNIST}
	\centering
	\begin{tabular}{lccclccc}
		\toprule
		\multirow{3}{*}{\multirow{2}{*}{Method}} & \multicolumn{3}{c}{MNIST-10/5} &  & \multicolumn{3}{c}{FashionMNIST-10/5} \\ \cmidrule(r){2-8}
		& \multicolumn{3}{c}{Metric}            &  & \multicolumn{3}{c}{Metric}      \\ \cmidrule(r){2-4} \cmidrule(r){6-8}
		& ACC & BWT & FWT & & ACC & BWT & FWT \\ \midrule
		\textit{None} (lower bound)   &$\sim$19.91  &-  &-  &  &$\sim$19.81  &-  &-  \\
		\textit{Joint}~ (upper bound) &$\sim$98.56  &-  &-   &  &$\sim$96.61  &-  &-  \\ \cmidrule(r){1-8}
		NN-RW \cite{TNNLS2017BLS} &19.92$\pm$0.23 &$-$0.9926$\pm$0.0029 &\textbf{0.0002$\pm$0.0004}   &  &19.88$\pm$0.26 &$-$0.9921$\pm$0.0028 &\textbf{0.0003$\pm$0.0004}  \\ 
		EWC \cite{PNAS2017EWC} &38.68$\pm$3.44 &$-$0.6958$\pm$0.0272 &$-$0.0442$\pm$0.0141  & &37.97$\pm$7.58 &$-$0.6864$\pm$0.0247 &$-$0.0547$\pm$0.0731 \\
		MAS \cite{ECCV2018MAS} &44.61$\pm$6.62 &$-$0.0589$\pm$0.0674 &$-$0.5873$\pm$0.0651  & &34.91$\pm$5.47 &$-$0.6095$\pm$0.0859 &$-$0.1950$\pm$0.0298 \\
		OL-EWC \cite{ICML2018Online_EWC} &57.38$\pm$4.04 &$-$0.3773$\pm$0.0584 &$-$0.1128$\pm$0.0451  & &54.09$\pm$4.03 &$-$0.4014$\pm$0.0760 &$-$0.1376$\pm$0.0499 \\
		SI \cite{ICML2017SI}   &69.44$\pm$4.37 &$-$0.0436$\pm$0.0900 &$-$0.2943$\pm$0.0935  & &52.11$\pm$2.22 &$-$0.4871$\pm$0.0609 &$-$0.0824$\pm$0.0420 \\
		EFT \cite{CVPR2021EFT} &82.53$\pm$1.15 &$-$0.0856$\pm$0.0685 &$-$0.0953$\pm$0.0428  & &74.79$\pm$1.23 &$-$0.1328$\pm$0.0537 &$-$0.0875$\pm$0.0227\\
		OWM \cite{NMI2019OWM}  &87.16$\pm$0.65 &$-$0.0561$\pm$0.0138 &$-$0.0901$\pm$0.0171  &  &80.32$\pm$0.95 &$-$0.1434$\pm$0.0268 &$-$0.0953$\pm$0.0206\\
		GEM \cite{NIPS2017GEM} &94.10$\pm$0.37 &$-$0.0323$\pm$0.0054 &\textbf{\textit{$-$0.0154$\pm$0.0024}}  &  &81.95$\pm$1.86 &$-$0.0881$\pm$0.0258 &$-$0.0811$\pm$0.0252\\
		FS-DGPM \cite{deng2021flattening} &89.12$\pm$1.14 &$-$0.0841$\pm$0.0123 &$-$0.0403$\pm$0.0070  & &80.89$\pm$0.74 &$-$0.1177$\pm$0.0182 &$-$0.1073$\pm$0.0154\\
		BiC \cite{wu2019large} &93.93$\pm$0.58 &$-$0.0390$\pm$0.0069 &$-$0.0285$\pm$0.0044  & &82.36$\pm$0.72 &$-$0.1084$\pm$0.0309 &$-$0.0847$\pm$0.0243\\
		ARI \cite{wang2022anti} &93.60$\pm$0.57 &$-$0.0418$\pm$0.0013 &$-$0.0283$\pm$0.0024  & &82.89$\pm$0.83 &$-$0.0966$\pm$0.0104 &$-$0.1056$\pm$0.0091\\ 
		PCL \cite{AAAI2021PCL} &94.14$\pm$0.67 &$-$0.0314$\pm$0.0256 &$-$0.0289$\pm$0.0109  & &83.27$\pm$0.81&$-$0.1238$\pm$0.0135&\textbf{\textit{$-$0.0257$\pm$0.0108}}\\
		RPS-Net \cite{NIPS2019RPS-Net}&94.53$\pm$1.92 &\textbf{\textit{$-$0.0237$\pm$0.0102}} &$-$0.0302$\pm$0.0087  & &84.18$\pm$1.60&\textbf{\textit{$-$0.0285$\pm$0.0124}}&$-$0.0417$\pm$0.0085\\
		LOGD \cite{CVPR2021LOGD}&94.87$\pm$0.59 &$-$0.0393$\pm$0.0068 &$-$0.0718$\pm$0.0233  & &\textbf{\textit{84.39$\pm$0.47}} &$-$0.0918$\pm$0.0315 &$-$0.0615$\pm$0.0067\\
		IL2M \cite{ICCV2019IL2M}&\textbf{\textit{95.51$\pm$0.42}} &$-$0.0418$\pm$0.0046 &$-$0.0314$\pm$0.0323  & &82.38$\pm$2.04 &$-$0.1483$\pm$0.03452 &$-$0.0451$\pm$0.0192\\ \cmidrule(r){1-8}
		IF2Net  &\textbf{96.16$\pm$0.54} &\textbf{$-$0.0044$\pm$0.0008} &$-$0.0231$\pm$0.0059  & &\textbf{95.09$\pm$0.47} &\textbf{$-$0.0189$\pm$0.0074} &$-$0.0267$\pm$0.0112\\ \hdashline
		IF2Net-EWC &96.21$\pm$0.71 & $-$0.0103$\pm$0.0074  & $-$0.0272$\pm$0.0041 & & 95.01$\pm$0.61  & $-$0.0125$\pm$0.0061  & $-$0.0329$\pm$0.0146 \\ \bottomrule
	\end{tabular}
\end{table*}

\textbf{Architectures.}
In our experiments, all methods used similar-sized neural network architectures. For MNIST and FashionMNIST, a fully connected network with three hidden layers was used. No pre-trained CNN is used for toy tasks, as a simple model already generated very good results. For Stanford Dogs, CUB-200, CIFAR-100, and ImageNet-Subset, a standard ResNet (e.g., ResNet-18, ResNet-56) was employed to provide well-extracted features, similar to \cite{NMI2019OWM,hayes2020remind,AAAI2021PCL,wang2022learning,wu2022class} where a feature extractor was pre-trained by the selected partial categories. Then, the remaining classes for which the feature extractor was not trained are sequentially fed into a simple model to learn different mappings. For example, it took 100 classes from CUB-200 to pre-train with ResNet-56 and then start consecutively learning another 100 classes that the pre-trained model has not encountered. This is in line with the characteristics of human learning that prepares sufficiently for challenging tasks, instead of having no any prior knowledge. A pre-trained CNN is equally used in our method and all the baselines.

\textbf{Hyper-parameter.} For all baselines, we employed the open-source code released by their authors or a popular third-party code. Particularly, we used the SGD optimizer with an initial learning rate ($\eta$) of 0.01 and a batch size of 64 in our experiments. For the replay-based methods, we kept a random exemplar set of 4.4k for MNIST, FashionMNIST, CUB-200, and ImageNet-Subset, and restricted the exemplar memory budget to 2k samples for Stanford Dogs and CIFAR-100 by following the setting in \cite{NIPS2017GEM,CVPR2021LOGD}. Dynamic structure-based methods will have the same magnitude as a base network after learning all tasks or with no limits on their expansion size. For regularization-based methods, the trade-off is from set \{100, 1000, 10000, 100000\}. Note that the other hyperparameters are with reference to the original settings by default. In our method, the hyperparameter settings included $\lambda=0.01$ and $\gamma=0.4$ in (\ref{Theta_iteratrion}), $\mu=2^{-30}$ in (\ref{LSM}), and $\alpha=0.1$ in (\ref{P_kk}).

\subsection{Experimental Results}
\subsubsection{Experiments on toy examples}
\textbf{MNIST-10/5 and FashionMNIST-10/5.} 
Table \ref{Table_MNIST_FMNIST} lists the comparative results of MNIST-10/5 and FashionMNIST-10/5. Both the \textit{None} and NN-RW, made available for single-task learning, only remember the most recently learned task, and the previously trained ones have been thoroughly forgotten. In contrast, the proposed method achieved the highest ACC (96.16\%, 95.09\%), approaching \textit{Joint}, and BWT (-0.0044, -0.0189), approaching zero, proving that IF2Net is a forgetting-free method. The complementary FWT values of the GEM and PCL were better than those of IF2Net. However, their ACC values are lower than ours. We observed that the performance of EWC, MAS, OL-EWC, and SI is significantly inferior to the others, as the CIL scenario is particularly difficult for regularization-based methods. Most replay-based methods achieved marginally worse ACC than IF2Net, especially for IL2M and LOGD on the MNIST-10/5 task sequence.
Additionally, using randomly shuffled task ordering is more desirable in an open-ended environment, which can play a role in model fairness. In this way, the standard deviations reported that our method also has low-order sensitivity, with similar accuracies for each task sequence regardless of the random task orderings. Finally, the results of IF2Net-EWC showed a marginal boost in learning performance, indicating its generality and compatibility. 

\subsubsection{Experiments on challenging tasks}
\textbf{CUB-\{100/5, 100/10, 100/20\}.}  
Three more challenging task sequences split by the CUB were used, and the comparative results were plotted in a stacked bar chart (see Fig. \ref{Fig_CUB}), where a model must incrementally recognize fine-grained visual categorization tasks. Our method outperformed the selected state-of-the-art methods on ACC while being equipped with a competitive standard deviation. Particularly, the proposed IF2Net improved ACC with the second-best method IL2M by an absolute margin of 5.48\%, and the standard deviations of GEM, OWM, and BiC were slightly superior to ours, but their ACC was much inferior to IF2Net. Hence, the results on the CUB-\{100/5, 100/10, 100/20\} task sequences highlighted our method as a promising tool for alleviating forgetting. The standard deviation of PCL was high because it employed a multi-head setup where it branched into an exclusive output layer for the classes learned so far. Consequently, it is demanding that PCL correctly matched the corresponding heads for decision-making, given a sequence of fine-grained visual tasks.

\begin{figure}[htbp]
	\centering
	\includegraphics[width=1.0\columnwidth]{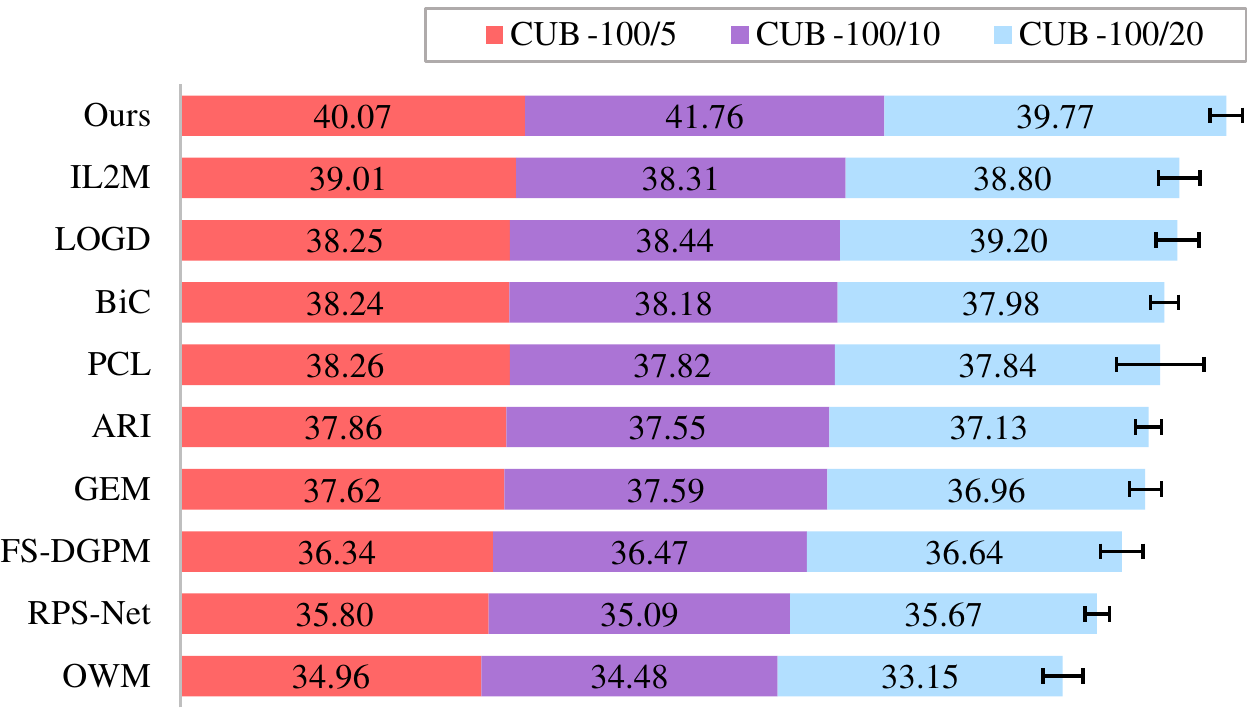}
	\caption{Test accuracy (\%) on CUB-\{100/5, 100/10, 100/20\} task sequences. All the results are (re) produced under 5 runs, with the separate mean and accumulative standard deviation reported.}
	\label{Fig_CUB}
\end{figure}

\textbf{Stanford Dogs-\{60/5, 60/10\}.}
Similarly, the results of the task sequences split by Stanford Dogs were consistent with those of the CUB, as depicted in Fig. \ref{Fig_StanfordDogs}. Note that the lower part of each sub-figure used the left vertical coordinate axis to report ACC, whereas the upper part used the right vertical coordinate axis to report the BWT and FWT results of the corresponding methods. Take Fig. \ref{StanfordDogs_5} as an example, and the proposed method exceeded the other baselines on ACC, among which IL2M is still the strongest baseline with 2.29\% lower than ours. Meanwhile, IF2Net achieved the best BWT value and second-best FWT value, which implies its strong ability to retain and transfer knowledge across fine-grained visual categorization tasks. This is also reflected in the comparison in Fig. \ref{StanfordDogs_10}.

\begin{figure}[tbp]
	\centering
	\subfloat[]{\includegraphics[width=0.90\columnwidth]{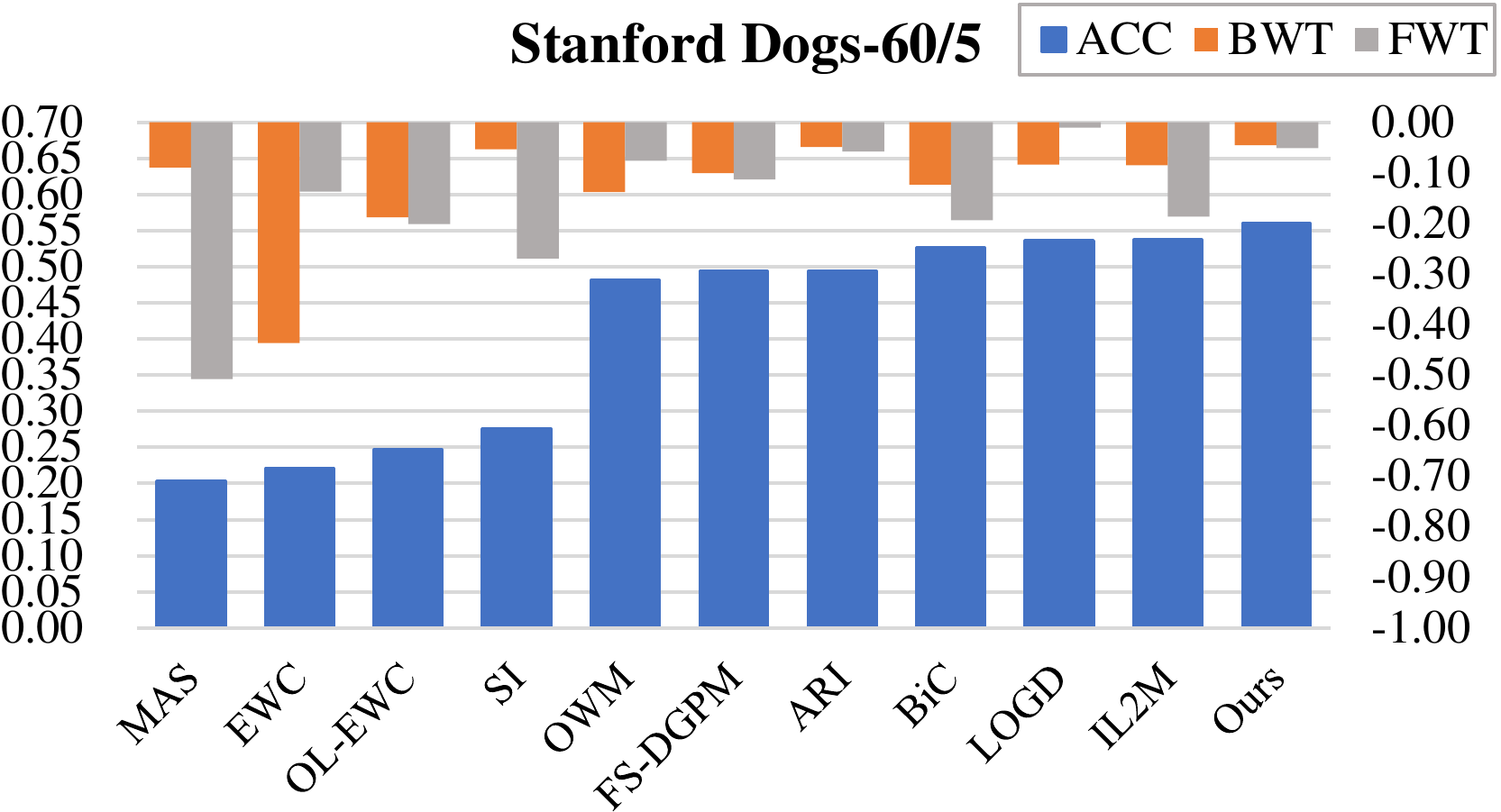}
		\label{StanfordDogs_5}}
	\hfil 
	\subfloat[]{\includegraphics[width=0.90\columnwidth]{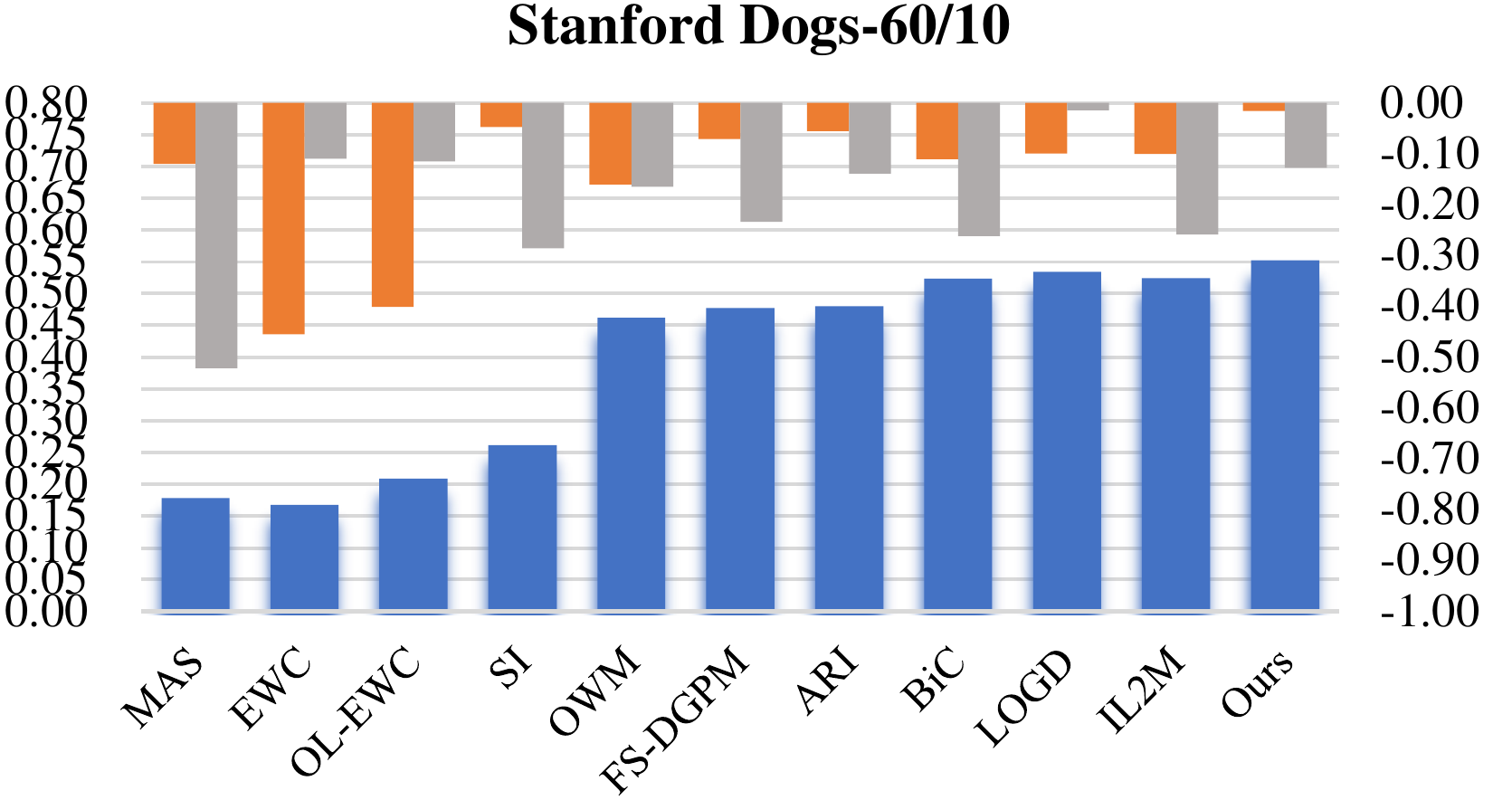}
		\label{StanfordDogs_10}}
	\caption{Performance comparisons on Stanford Dogs measured by three evaluation metrics. All the results are (re) produced under 5 runs. (a) Performance after sequentially learning five 12-class tasks. (b) Performance after sequentially learning ten 6-class tasks.}
	\label{Fig_StanfordDogs}
\end{figure}

\begin{table}[tbp]
	\renewcommand{\arraystretch}{1.5}  
	\setlength{\tabcolsep}{0.55mm} 
	\caption{Test accuracy (\%) on CIFAR-60/10 task sequence. We mark the best results in \textbf{Bold} and the second-best in \textbf{\textit{Italic}}. Note that the reported task $t$ accuracy is an average of all $1,2,\dots,t$ tasks seen so far.}
	\label{Table_CIFAR}
	\centering
	\begin{tabular}{lcccccccccc}
		\toprule
		\multirow{2}{*}{\textbf{Method}} & \multicolumn{10}{c}{\textbf{Task Number}}  \\ \cmidrule{2-11}
		& 
		\multicolumn{1}{c}{1} &
		\multicolumn{1}{c}{2} &
		\multicolumn{1}{c}{3} &
		\multicolumn{1}{c}{4} &
		\multicolumn{1}{c}{5} &
		\multicolumn{1}{c}{6} &
		\multicolumn{1}{c}{7} &
		\multicolumn{1}{c}{8} &
		\multicolumn{1}{c}{9} &
		\multicolumn{1}{c}{10} \\ \hline
		OWM     & \textbf{85.36} & 66.23 & 55.50 & 49.39 & 48.16 & 45.28 & 42.34 & 39.76 & 38.10 & 38.19 \\
		ARI     & 83.53 & 67.92 & 57.56 & 54.25 & 51.83 & 48.42 & 46.35 & 41.06 & 40.94 & 38.27 \\
		FS-DGPM & 83.45 & 74.08 & 64.44 & 59.66 & 53.13 & 48.80 & 47.28 & 45.06 & 41.83 & 39.42 \\
		RPS-Net & 83.83 & 66.91 & 65.05 & 64.25 & 58.16 & 52.83 & 49.04 & 47.47 & 45.31 & 43.01 \\
		LOGD    & 85.03 & 72.41 & 66.55 & 61.87 & 59.20 & 54.11 & 55.76 & \textit{\textbf{54.12}} & 51.53 & 49.00 \\
		GEM     & 84.96 & 74.53 & 65.22 & 60.96 & 60.07 & \textit{\textbf{59.14}} & \textit{\textbf{56.31}} & 53.35 & 52.96 & 49.83 \\
		IL2M    & \textit{\textbf{85.11}} & \textbf{76.44} & \textbf{69.84} & \textit{\textbf{65.42}} & \textit{\textbf{63.19}} & 58.63 & 54.35 & 54.10 & 50.87 & 51.78 \\
		BiC     & 84.60 & 72.13 & 67.64 & 64.80 & 60.12 & 58.42 & 56.10 & 54.00 & \textit{\textbf{53.77}} & \textit{\textbf{52.03}} \\  \hline
		Ours    & 84.71 & \textit{\textbf{74.58}} & \textit{\textbf{68.11}} & \textbf{65.45} & \textbf{63.70} & \textbf{59.94} & \textbf{59.38} & \textbf{59.63} & \textbf{57.22} & \textbf{56.98} \\ \bottomrule
	\end{tabular}
\end{table}

\textbf{CIFAR-\{60/10, 60/20\}.}
The performance of different CL methods on task sequences split by CIFAR-100 is reported in Table \ref{Table_CIFAR}, which focused on the changing trend of ACC with an increasing number of tasks. The results indicated the superior performance of the proposed method in the CIL scenario. Although it started with a minor difference in the first task, our method showed a relatively gentle ACC degradation during sequential training. Furthermore, compared with the second-best ACC from IL2M, our method achieved a relative gain of 4.95\% after the end of learning. In addition to Table \ref{Table_CIFAR}, Fig. \ref{CIFAR_20} vividly depicts the consistent results by incrementally learning three new classes per task.

\begin{figure}[tbp]
	\centering
	\includegraphics[width=1.0\columnwidth]{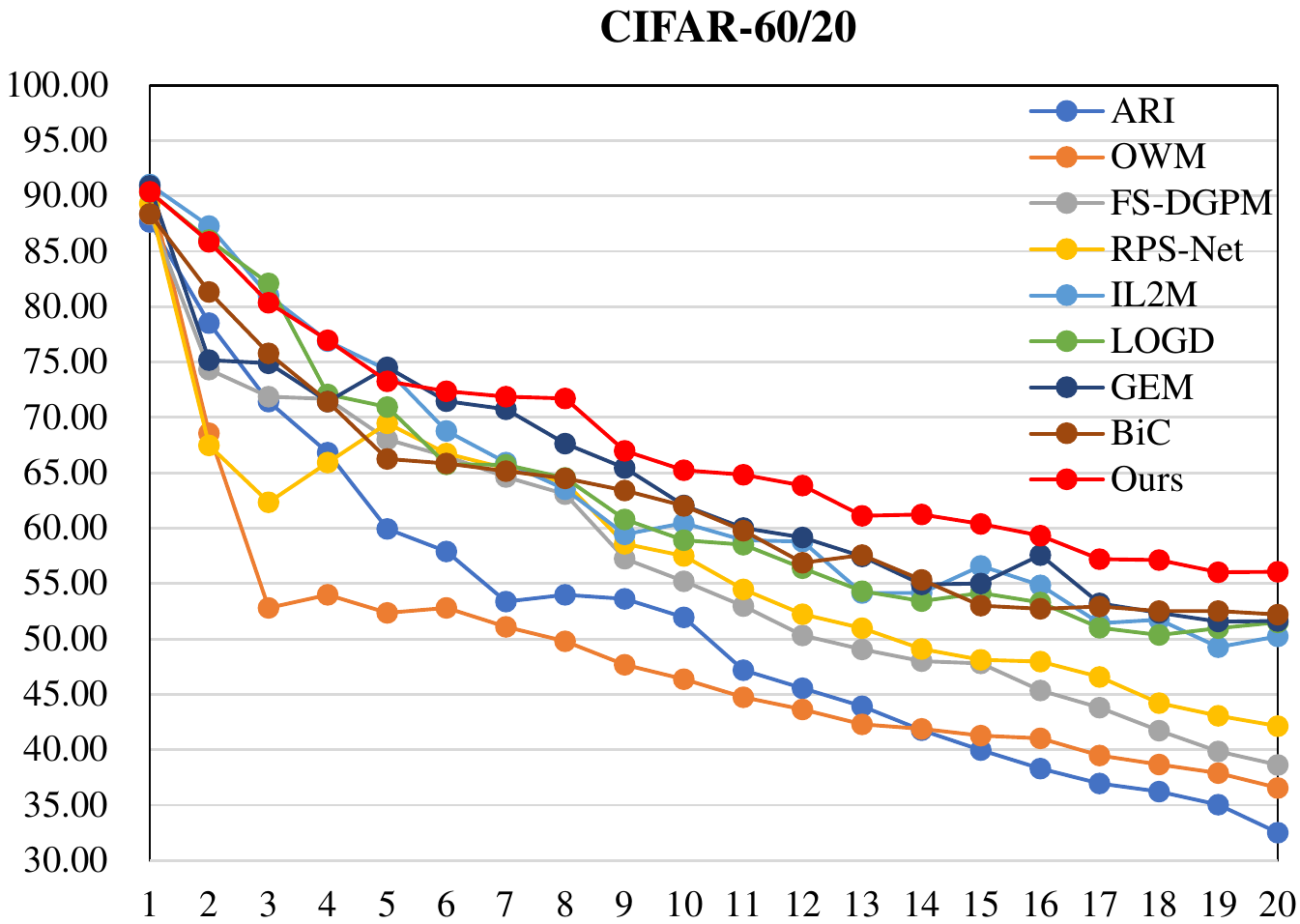}
	\caption{Test accuracy (\%) on CIFAR-60/20 task sequence, where each method needs to incrementally learn 3 new classes per task.}
	\label{CIFAR_20}
\end{figure}

\textbf{ImageNet-\{100/10, 100/25, 100/50\}.}
Table \ref{Table_ImangeNet} compares different methods on task sequences split by ImageNet-Subset, where the catastrophic forgetting problem becomes more challenging in either a tough individual task or a rather long task sequence. The results show that IF2Net surpasses the selected state-of-the-art methods with margins of 2.86\%, 2.08\%, and 1.53\% for ImageNet-\{100/10, 100/25, 100/50\}, respectively. Overall, extensive experiments validated the competitive performance of IF2Net against state-of-the-art methods, which are inherently immune to catastrophic forgetting by functionally maintaining the weights relative to each seen task untouched all the time.

\begin{table}[tbp]
	\renewcommand{\arraystretch}{1.5}  
	\setlength{\tabcolsep}{1.2mm} 
	\caption{Large-scale experiments on ImageNet in the CIL scenario that are measured by average test accuracy (\%). The methods are run under 5 random task orderings, with the mean and standard deviation reported.}
	\label{Table_ImangeNet}
	\centering
	\begin{tabular}{lccc}
		\toprule
		\multirow{2}{*}{\textbf{Method}} & \multicolumn{3}{c}{\textbf{Task Sequence}} \\ \cline{2-4}
		& ImageNet-100/10 & ImageNet-100/25 & ImageNet-100/50  \\ \hline
		FS-DGPM  &35.08$\pm$2.04   &34.96$\pm$1.96  &31.42$\pm$2.33    \\
		ARI      &36.99$\pm$1.17   &34.12$\pm$1.55  &30.68$\pm$2.42    \\ 
		RPS-Net  &40.25$\pm$2.65   &36.62$\pm$2.89  &29.33$\pm$3.45    \\ 
		GEM      &46.23$\pm$1.45   &44.62$\pm$1.13  &42.78$\pm$1.12    \\
		IL2M     &49.86$\pm$1.57   &48.05$\pm$1.25  &47.52$\pm$1.04    \\ \hline
		Ours     &52.72$\pm$1.65   &50.13$\pm$1.57  &49.05$\pm$2.75    \\
		\bottomrule	 
	\end{tabular}
\end{table}

\subsection{Ablation Study} \label{Ablation_Study}
Before concluding our work, we conducted ablation studies to provide a deeper understanding of our method. To this end, the empirical investigation of the effectiveness of core designs in IF2Net is detailed as follows.

\subsubsection{Advantage of output weight initialization using a closed-form solution} \label{Initialization}

\begin{table*}[htbp]
	\renewcommand{\arraystretch}{1.5}  
	\setlength{\tabcolsep}{0.8mm} 
	\caption{Empirical analysis of the output weight initialization using a closed-form solution on FashionMNIST-10/5. Under different learning rates ($\eta$) and epochs, the results were measured by the loss of learning task 1, the accuracy of learning task 1, and all. We marked the best results in \textbf{bold}.\\ Note that the number 1k, 2.5k, and 5k were the selected mini-batch for analytic initialization.}
	\label{Table_Param_Initialization}
	\centering
	\begin{tabular}{llllccclccclccc}
		\toprule
		\multirow{2}{*}{\textbf{Output Weights}} &  & \multicolumn{1}{l}{\multirow{2}{*}{$\eta$}} &  & \multicolumn{3}{c}{\textbf{Epoch 1}} &  & \multicolumn{3}{c}{\textbf{Epoch 5}} &  & \multicolumn{3}{c}{\textbf{Epoch 10}} \\ \cmidrule{5-7} \cmidrule{9-11} \cmidrule{13-15}
		&  & \multicolumn{1}{c}{} &  & Loss-task1 & ACC-task1 & ACC-All &  & Loss-task1 & ACC-task1 & ACC-All &  & Loss-task1 & ACC-task1 & ACC-All \\ \hline
		\multirow{2}{*}{Random Initialization} &  & 0.02 &  & 7.0902 & 0.9651 & 0.9022 &  & 2.5381 & 0.9750 & 0.9056 &  & 3.0367 & 0.9696 & 0.8689 \\
		&  & 0.002 &  & 6.6710 & 0.7933 & 0.6367 &  & 4.4575 & 0.7843 & 0.6553 &  & 2.0224 & 0.8975 & 0.6173 \\ \hline
		\multirow{2}{*}{Analytic Initialization (1k)} &  & 0.002 &  & 0.5013 & 0.9698 & 0.9281 &  & 0.2815 & 0.9713 & 0.9183 &  & 0.1369 & 0.9833 & 0.9189 \\ 
		&  & 0.0002 &  & 0.1444 & 0.9615 & 0.9237 &  & 0.0929 & 0.9717 & 0.9293 &  & 0.0634 & 0.9833 & 0.9200 \\ \cmidrule{3-15}
		Analytic Initialization (2.5k) &  & 0.0002 &  & 0.0923 & 0.9730 & 0.9465 &  & \textbf{0.0548} & \textbf{0.9848} & 0.9502 &  & \textbf{0.0489} & \textbf{ 0.9900} & 0.9411 \\ \cmidrule{3-15}
		Analytic Initialization (5k) &  & 0.0002 &  & \textbf{0.0910} & \textbf{0.9735} & \textbf{0.9505} &  & 0.0553 & 0.9810 & \textbf{0.9580} &  & 0.0550 & 0.9830 & \textbf{0.9482} \\ \bottomrule
	\end{tabular}
\end{table*}

As stated earlier, we used analytic initialization instead of random initialization for task 1; that is, we started with a selected mini-batch for computing the closed-form solution $\bm{\beta}_1^\ast$, followed in the same way as the subsequent updates of the output weight $\bm{\beta}_t$ $(t\geq 2)$. To investigate the benefits of this strategy, we compared two types of output weight initializations. Specifically, different $\eta$ and epochs were considered in the ablation study, including (i) randomly initialized output weights and (ii) starting with a selected mini-batch for analytically computing the output weights. Table \ref{Table_Param_Initialization} presents comparative results based on the FashionMNIST task sequence. We observed that with a proper $\eta$, analytic initialization can be converged more quickly than its random counterpart, as measured by the loss value, test accuracy for task 1, and ACC value for all. This means that it can effectively alleviate the restriction of the orthogonal gradient descent in the direction of the parameter update. Meanwhile, analytic initialization leveraging 2.5k (approximately 20\% of all available) samples from Task 1 can yield competitive results. By contrast, IF2Net with random initialization makes it difficult to find the optima even after ten epochs. 

\subsubsection{Efficiency of the node blocks in hidden layers} \label{Node_blocks}
We proposed randomization-based representation-level learning, in which several node blocks were randomly allocated to better exploit hidden information among the training samples. Taking FC1 as an example, Table \ref{Table_Node_blocks} illustrates the extent to which different combinations contributed to the learning performance. We observed that with the fixed 100 hidden nodes in FC1, IF2Net equipped with too small or too large node blocks would significantly degrade the values of ACC, BWT, and FWT. By contrast, some pairs (10, 10) and (25, 4) used in our experiments performed well, implying that appropriate node blocks are indispensable for the proposed representation-level learning process. Efficiency analysis provided some prior knowledge of node block selection, and it does not mean that the settings listed were the optimal choice. In other words, user-specified fine-tuning based on the recommended examples might yield better single and overall performance results.

\begin{table}[htbp]
	\renewcommand{\arraystretch}{1.5}  
	\caption{Empirical analysis of the node blocks on FashionMNIST-10/5. With different combinations, the results were measured by three evaluation metrics. The randomly allocated nodes in FC1 were composed of $n_1$ groups of node blocks, with each group $s_1$ nodes.}
	\label{Table_Node_blocks}
	\centering
	\begin{tabular}{lccccc}
		\toprule
		\multicolumn{2}{l}{\textbf{Node Block}} &  & \multicolumn{3}{c}{\textbf{Metric}} \\ \cmidrule{1-2} \cmidrule{4-6}
		$n_1$ & $s_1$ &  & ACC & BWT & FWT \\ \hline
		100 & 1 &  & 11.18$\pm$7.66 & 0.0002$\pm$0.0012 & -0.4751$\pm$0.0821 \\
		50 & 2 &  & 62.18$\pm$22.95 & -0.1115$\pm$0.0075 & -0.3191$\pm$0.1946 \\
		25 & 4 &  & 95.41$\pm$1.35 & -0.0072$\pm$0.0018 & -0.0316$\pm$0.0123 \\
		10 & 10 &  & 94.76$\pm$0.27 & -0.0140$\pm$0.0083 & -0.0357$\pm$0.0115 \\
		5 & 20 &  & 91.68$\pm$1.74 & -0.0342$\pm$0.0084 & -0.0554$\pm$0.0138 \\
		1 & 100 &  & 75.76$\pm$2.92 & -0.0845$\pm$0.0104 & -0.2022$\pm$0.0322 \\ \bottomrule		 
	\end{tabular}
\end{table}

\subsubsection{Impact of increasing orthogonalized layers} \label{Orthogonalized_layers}
One of the most apparent differences between IF2Net and OWM is that the latter only induces orthogonalization to the final output layer instead of layer-wise operations. To explore the effect of gradually adding orthogonalized layers, we denote the
corresponding cases: (a) Four Layers/OWM; (b) FC1-Three Layers; (c) FC1-FC2-Two layers; (d) FC1-FC2-FC3-One Layer/IF2Net with random initialization; (e) FC1-FC2-FC3-One Layer/IF2Net with analytic initialization. Table \ref{Table_orthogonaqlized_layers} compares different cases on FashionMNIST in which GPU memory usage is proportional to the number of orthogonalized layers. Furthermore, with only one or two orthogonalized layer(s), IF2Net performed well on the four metrics. Interestingly, the ACC produced from Case (d) is lower than that of Case (c) and Case (e), and replacing random initialization with its analytic implementation achieves counterattack results. Hence, we applied output-layer orthogonalization to the final decision-making process.

\begin{table}[htbp]
	\renewcommand{\arraystretch}{1.5}  
	\setlength{\tabcolsep}{2.0mm} 
	\caption{Empirical analysis of increasing several orthogonalized layers on FashionMNIST-10/5. In different cases, the results are measured by four evaluation metrics. Note that the metric GPU (MB) refers to memory usage during learning the last task.}
	\label{Table_orthogonaqlized_layers}
	\centering
	\begin{tabular}{lcccc}
		\toprule
		\multirow{2}{*}{\textbf{Case}} & \multicolumn{4}{c}{\textbf{Metric}}  \\ \cmidrule{2-5}
		& ACC & BWT & FWT & GPU \\ \hline
		(a) & 0.8058$\pm$0.0101 & -0.2216$\pm$0.0140 & -0.0197$\pm$0.0055 & 11635 \\
		(b) & 0.9214$\pm$0.0052 & -0.0212$\pm$0.0067 & -0.0125$\pm$0.0042 & 5281 \\
		(c) & 0.9428$\pm$0.0045 & -0.0158$\pm$0.0112 & -0.0165$\pm$0.0043 & 3435 \\
		(d) & 0.9090$\pm$0.0201 & -0.0292$\pm$0.0217 & -0.0758$\pm$0.0528 & 829 \\
		(e)  & 0.9526$\pm$0.0088 & -0.0071$\pm$0.0060 & -0.0289$\pm$0.0173 & 829 \\ \bottomrule
	\end{tabular}
\end{table}

\subsubsection{Superiority in generalization bound error} \label{Rademacher_complexity}
We followed the same setting in the previous section such that one can correspond to the Rademacher complexity for different cases. Specifically, we randomly took the value in \{-1,+1\} with equal probability as the training labels of each task and computed the Rademacher complexity based on (\ref{Rademacher_def}). Then, we accumulated the results of all previous tasks during each training session, measuring how the function class (current model) incrementally fit the randomly sampled labels. Fig. \ref{Fig_Rademacher_complexity} plotted the value of Rademacher complexity term for our method and OWM over the whole training sessions. We observed that the generalization bound error increases with the number of new tasks, as stated in \textbf{\textit{Lemma \ref{Lemma_2}}}. Especially, the IF2Net with analytic initialization corresponding to Case (e) used one orthogonalized layer in the final output layer and obtained the best generalization bound error. Contrastingly, for example, the more orthogonalized the layers, the faster the error increases, as stated in \textbf{\textit{Lemma \ref{Lemma_3}}}. Hence, this ablation study distinguished why our method is better than OWM and proved its superiority in generalization bound error.

\begin{figure}[htbp]
	\centering
	\includegraphics[width=1.0\columnwidth]{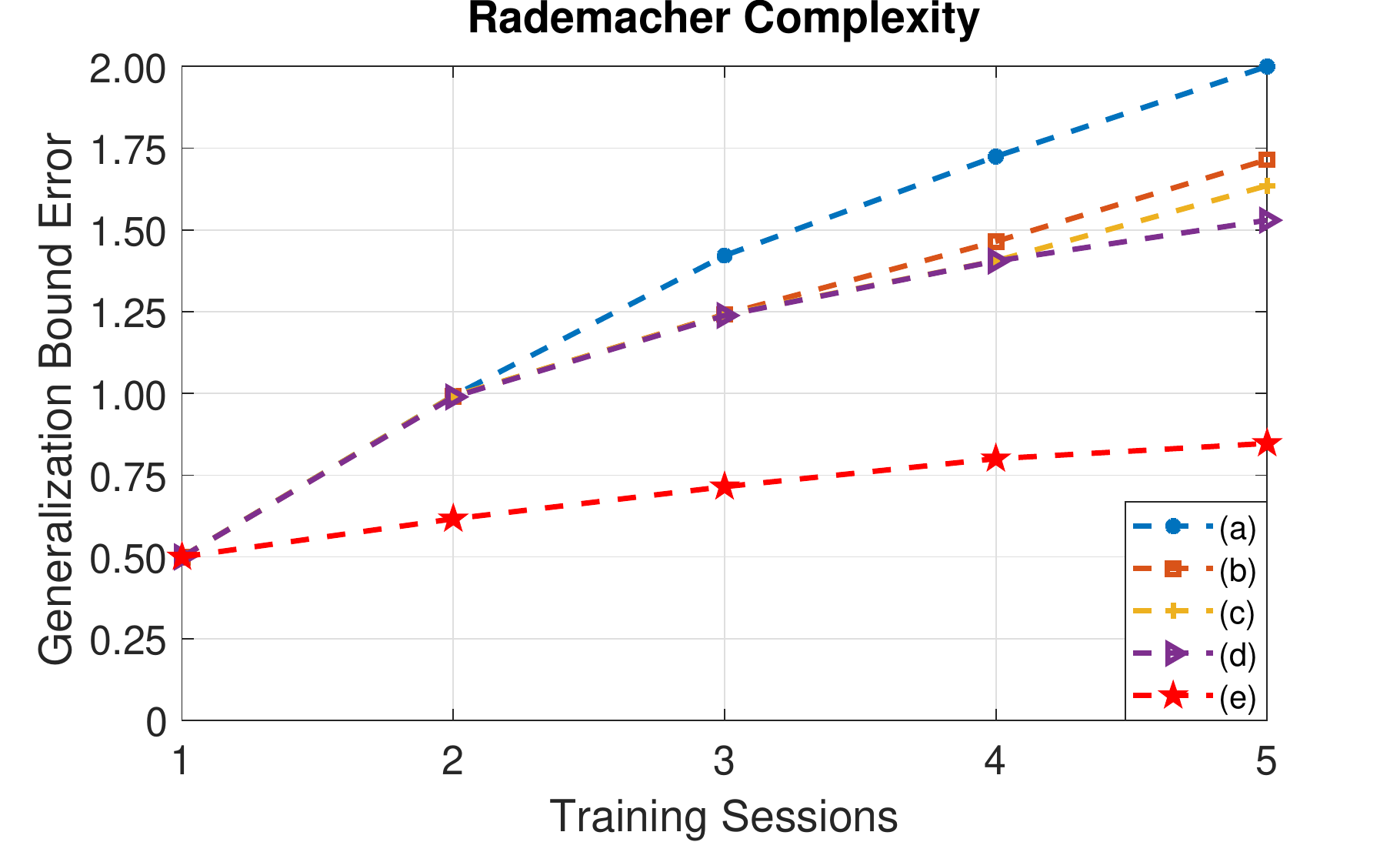}
	\caption{Empirical analysis of generalization bound error based on Rademacher complexity. Note that the legend used here is the same as that of the previous setting.}
	\label{Fig_Rademacher_complexity}
\end{figure}

%

\section{Conclusions}
Most continual learning approaches emphasize employing external (not inherent) conditions, such as exemplar buffers/data generators, additional objectives, and exclusive branches to alleviate forgetting, rather than focusing on an inherent implementation of avoiding overwritten weights within connectionist models. To this end, this study introduced two simple projections, randomization-based representation-level learning and orthogonalization-based decision-making, to innately overcome forgetting in a backbone network. The theoretical verification for convergence analysis of representation approximation and proof of the learning-without-forgetting decision were provided with rigorous mathematical deductions. The resulting method avoided forward and backward passes of backpropagation, and one could retain weights of the hidden layers invariable and exclusively adapt to that of the output layer during sequential training. Thus, the proposed method perfectly reproduced what was previously learned and relaxed the computational and memory requirements.  

Recent studies have highlighted the potential importance of leveraging a pre-trained model in the continual learning context. Based on this, we used a simple architecture to better analyze a network's behavior throughout this study, as challenging class-incremental learning can be performed with small adaptations. It is natural to consider whether IF2Net can be scaled to end-to-end implementation for challenging tasks. The purpose of this study was to provide a systematic solution with dual projections (randomization and orthogonalization) for continual learning challenges, and we leave it for future research.


%

%

\ifCLASSOPTIONcompsoc
\section*{Acknowledgments}
\else
\section*{Acknowledgment}
\fi
This work was supported by the National Key R\&D Program of China under Grant 2021ZD0201300, the Fundamental Research Funds for the Central Universities under Grant YCJJ202203012, the State Scholarship Fund of China Scholarship Council under Grant 202206160045, and the National Natural Science Foundation of China under Grants U1913602, 61936004, and 62206204.


\ifCLASSOPTIONcaptionsoff
\newpage
\fi



%
%
%
\bibliographystyle{IEEEtran}
\bibliography{IEEEabrv,mybibfile}

%


%




\end{document}